\title{Distribution-Specific Auditing For Subgroup Fairness
%%%% Cite as
%%%% Update your official citation here when published 
% \thanks{\textit{\underline{Citation}}: 
% \textbf{Authors. Title. Pages.... DOI:000000/11111.}} 
}
\author{
  Daniel Hsu \\
  Columbia University \\
  New York, NY, USA\\
  \texttt{djhsu@cs.columbia.edu} \\
  %% examples of more authors
   \And
  Jizhou Huang \\
  Washington University in St. Louis \\
  St. Louis, MO, USA\\
  \texttt{huang.jizhou@wustl.edu} \\
  \And
  Brendan Juba \\
  Washington University in St. Louis \\
  St. Louis, MO, USA\\
  \texttt{bjuba@wustl.edu} \\
  %% \AND
  %% Coauthor \\
  %% Affiliation \\
  %% Address \\
  %% \texttt{email} \\
  %% \And
  %% Coauthor \\
  %% Affiliation \\
  %% Address \\
  %% \texttt{email} \\
  %% \And
  %% Coauthor \\
  %% Affiliation \\
  %% Address \\
  %% \texttt{email} \\
}
\newtheorem{theorem}{Theorem}[section]
\newtheorem{proposition}[theorem]{Proposition}
\newtheorem{lemma}[theorem]{Lemma}
\newtheorem{corollary}[theorem]{Corollary}
\newtheorem{fact}[theorem]{Fact}
\newtheorem{definition}[theorem]{Definition}
\newtheorem{remark}[theorem]{Remark}
\newtheorem{example}[theorem]{Example}
\newtheorem{assumption}[theorem]{Assumption}
\newcommand{\union}{\ \cup\ }
\newcommand{\isect}{\ \cap\ }
\newcommand{\cond}{\ | \ }
\newcommand{\ceil}[1]{\ensuremath \left\lceil #1 \right\rceil}
\NewDocumentCommand\prob{sD<>{}m}{
    \IfBooleanTF{#1}{
        \IfNoValueTF{#2}{
            \tilde{\Pr}\lbr{#3}
        }{
            \tilde{\Pr}_{#2}\lbr{#3}
        }
    }{
        \IfNoValueTF{#2}{
            \Pr\lbr{#3}
        }{
            \Pr_{#2}\lbr{#3}
        }
    }
}
\newcommand{\C}{\mathcal{C}}
\newcommand{\D}{\mathcal{D}}
\newcommand{\HS}{\mathcal{H}}
\newcommand{\R}{\mathbb{R}}
\newcommand{\N}{\mathbb{N}}
\newcommand{\Z}{\mathbb{Z}}
\newcommand{\Sphere}{\mathbb{S}}
\newcommand{\Pg}{\mathcal{P}}
\NewDocumentCommand{\GI}{}{\mathcal{G}}
\newcommand{\transpose}{\top}
\DeclareMathOperator*{\sgn}{\mathrm{sgn}}
\newcommand{\gaussian}{\mathcal{N}}
\newcommand{\identity}{\mathrm{I}}
\NewDocumentCommand{\algo}{}{\mathcal{A}}
\newcommand{\opt}{\mathrm{opt}}
\newcommand{\dom}{\mathbf{dom}}
\newcommand{\abs}[1]{\left|#1\right|}
\newcommand{\sbr}[1]{\left(#1\right)}
\newcommand{\lbr}[1]{\{ #1 \}}
\newcommand{\cleq}[1][]{\overset{\mathrm{#1}}{\leq}}
\newcommand{\ceq}[1][]{\overset{\mathrm{#1}}{=}}
\NewDocumentCommand{\f}{o}{\IfValueTF{#1}{f_i(#1)}{f_i}}
\NewDocumentCommand{\fa}{o}{\IfValueTF{#1}{\bar{f}\sbr{#1}}{\bar{f}}}
\NewDocumentCommand{\g}{o}{\IfValueTF{#1}{g_i(#1)}{g_i}}
\NewDocumentCommand{\ga}{o}{\IfValueTF{#1}{\bar{g}\sbr{#1}}{\bar{g}}}
\newcommand{\norm}[1]{\lVert {#1} \rVert}
\NewDocumentCommand\vvector{}{\mathbf{v}}
\NewDocumentCommand\vscript{sD<>{1:d}oD(){}}{
	\IfBooleanTF#1
		{\IfNoValueTF{#3}
			{\bar{\vvector}_{#2}^{#4}}
			{\bar{\vvector}_{#2}^{(#3)#4}}
		}
		{\IfNoValueTF{#3}
			{\vvector_{#2}^{#4}}
			{\vvector_{#2}^{(#3)#4}}
		}
}
\NewDocumentCommand\vasync{sO{i}}{
	\IfBooleanTF{#1}
		{(\vscript<0>[#2-1],\vscript*[#2])}
		{(\vscript<0>[#2-1],\vscript[#2])}
}
\NewDocumentCommand\evector{}{\mathbf{e}}
\NewDocumentCommand\escript{sD<>{}oD(){}}{
	\IfBooleanTF#1
		{\IfNoValueTF{#3}
			{\bar{\evector}_{#2}^{#4}}
			{\bar{\evector}_{#2}^{(#3)#4}}
		}
		{\IfNoValueTF{#3}
			{\evector_{#2}^{#4}}
			{\evector_{#2}^{(#3)#4}}
		}
}
\NewDocumentCommand\elist{sD<>{}O{1}mO{}}{
	\IfBooleanTF#1
		{\escript*<#2>[#3](#5), \ldots, \escript*<#2>[#4](#5)}
		{\escript<#2>[#3](#5), \ldots, \escript<#2>[#4](#5)}
}
\NewDocumentCommand\wvector{}{\mathbf{w}}
\NewDocumentCommand\wscript{sD<>{}oD(){}}{
	\IfBooleanTF#1
		{\IfNoValueTF{#3}
			{\bar{\wvector}_{#2}^{#4}}
			{\bar{\wvector}_{#2}^{(#3)#4}}
		}
		{\IfNoValueTF{#3}
			{\wvector_{#2}^{#4}}
			{\wvector_{#2}^{(#3)#4}}
		}
}
\NewDocumentCommand\wlist{sD<>{}O{1}mO{}}{
	\IfBooleanTF#1
		{\wscript*<#2>[#3](#5), \ldots, \wscript*<#2>[#4](#5)}
		{\wscript<#2>[#3](#5), \ldots, \wscript<#2>[#4](#5)}
}
\NewDocumentCommand\uvector{}{\mathbf{u}}
\NewDocumentCommand\uscript{sD<>{}oD(){}}{
	\IfBooleanTF#1
		{\IfNoValueTF{#3}
			{\bar{\uvector}_{#2}^{#4}}
			{\bar{\uvector}_{#2}^{(#3)#4}}
		}
		{\IfNoValueTF{#3}
			{\uvector_{#2}^{#4}}
			{\uvector_{#2}^{(#3)#4}}
		}
}
\NewDocumentCommand\ulist{sD<>{}O{1}mO{}}{
	\IfBooleanTF#1
		{\uscript*<#2>[#3](#5), \ldots, \uscript*<#2>[#4](#5)}
		{\uscript<#2>[#3](#5), \ldots, \uscript<#2>[#4](#5)}
}
\NewDocumentCommand\xvector{s}{
	\IfBooleanTF#1
		{\bar{\mathbf{x}}}
		{\mathbf{x}}
}
\NewDocumentCommand\xscript{sD<>{}oD(){}}{
	\IfBooleanTF#1
		{\IfNoValueTF{#3}
			{\xvector*_{#2}^{#4}}
			{\xvector*_{#2}^{(#3)#4}}
		}
		{\IfNoValueTF{#3}
			{\xvector_{#2}^{#4}}
			{\xvector_{#2}^{(#3)#4}}
		}
}
\NewDocumentCommand\svector{s}{
	\IfBooleanTF#1
		{\bar{\mathbf{s}}}
		{\mathbf{s}}
}
\NewDocumentCommand\sscript{sD<>{}oD(){}}{
	\IfBooleanTF#1
		{\IfNoValueTF{#3}
			{\svector*_{#2}^{#4}}
			{\svector*_{#2}^{(#3)#4}}
		}
		{\IfNoValueTF{#3}
			{\svector_{#2}^{#4}}
			{\svector_{#2}^{(#3)#4}}
		}
}
\NewDocumentCommand\slist{sD<>{}O{1}mO{}}{
	\IfBooleanTF#1
		{\sscript*<#2>[#3](#5), \ldots, \sscript*<#2>[#4](#5)}
		{\sscript<#2>[#3](#5), \ldots, \sscript<#2>[#4](#5)}
}
\NewDocumentCommand\yvector{s}{
	\IfBooleanTF#1
		{Y}
		{\mathbf{y}}
}
\NewDocumentCommand\yscript{sD<>{}oD(){}}{
	\IfBooleanTF#1
		{\IfNoValueTF{#3}
			{\yvector*_{#2}^{#4}}
			{\yvector*_{#2}^{(#3)#4}}
		}
		{\IfNoValueTF{#3}
			{\yvector_{#2}^{#4}}
			{\yvector_{#2}^{(#3)#4}}
		}
}
\NewDocumentCommand\ylist{sD<>{}O{1}mO{}}{
	\IfBooleanTF#1
		{\yscript*<#2>[#3](#5), \ldots, \yscript*<#2>[#4](#5)}
		{\yscript<#2>[#3](#5), \ldots, \yscript<#2>[#4](#5)}
  }
\NewDocumentCommand\zvector{s}{
	\IfBooleanTF#1
		{Z}
		{\mathbf{z}}
}
\NewDocumentCommand\zscript{sD<>{}oD(){}}{
	\IfBooleanTF#1
		{\IfNoValueTF{#3}
			{\zvector*_{#2}^{#4}}
			{\zvector*_{#2}^{(#3)#4}}
		}
		{\IfNoValueTF{#3}
			{\zvector_{#2}^{#4}}
			{\zvector_{#2}^{(#3)#4}}
		}
}
\NewDocumentCommand\zlist{sD<>{}O{1}mO{}}{
	\IfBooleanTF#1
		{\zscript*<#2>[#3](#5), \ldots, \zscript*<#2>[#4](#5)}
		{\zscript<#2>[#3](#5), \ldots, \zscript<#2>[#4](#5)}
}
\NewDocumentCommand\muscript{sD<>{}oD(){}}{
	\IfBooleanTF#1
		{\IfNoValueTF{#3}
			{\Mu_{#2}^{#4}}
			{\Mu_{#2}^{(#3)#4}}
		}
		{\IfNoValueTF{#3}
			{\mu_{#2}^{#4}}
			{\mu_{#2}^{(#3)#4}}
		}
}
\NewDocumentCommand\siscript{sD<>{}oD(){}}{
	\IfBooleanTF#1
		{\IfNoValueTF{#3}
			{\Sigma_{#2}^{#4}}
			{\Sigma_{#2}^{(#3)#4}}
		}
		{\IfNoValueTF{#3}
			{\sigma_{#2}^{#4}}
			{\sigma_{#2}^{(#3)#4}}
		}
}
\NewDocumentCommand\psinorm{mD<>{2}}{
    \norm{#1}_{\psi_{#2}}
}
\DeclareMathOperator*{\argmin}{argmin}
\DeclareMathOperator*{\argmax}{argmax}
\NewDocumentCommand{\bigO}{}{\mathcal{O}}
\begin{document}
\maketitle

\begin{abstract}
    We study the problem of auditing classifiers for statistical subgroup fairness. Kearns et al.~\cite{kearns2018preventing} showed that the problem of auditing combinatorial subgroups fairness is as hard as agnostic learning. Essentially all work on remedying statistical measures of discrimination against subgroups assumes access to an oracle for this problem, despite the fact that no efficient algorithms are known for it. If we assume the data distribution is Gaussian, or even merely log-concave, then a recent line of work has discovered efficient agnostic learning algorithms for halfspaces. Unfortunately, the reduction of Kearns et al.\ was formulated in terms of weak, ``distribution-free'' learning, and thus did not establish a connection for families such as log-concave distributions.
    %Unfortunately, the boosting-style reductions given by Kearns et al. required the agnostic learning algorithm to succeed on reweighted distributions that may not be log-concave, even if the original data distribution was. 
    In this work, we give positive and negative results on auditing for Gaussian distributions: On the positive side, we present an alternative approach to leverage these advances in agnostic learning and thereby obtain the first polynomial-time approximation scheme (PTAS) for auditing nontrivial combinatorial subgroup fairness: we show how to audit statistical notions of fairness over homogeneous halfspace subgroups when the features are Gaussian. On the negative side, we find that under cryptographic assumptions, no polynomial-time algorithm can guarantee any nontrivial auditing, even under Gaussian feature distributions, for general halfspace subgroups.
\end{abstract}

\section{INTRODUCTION}
    The deployment of decision rules obtained using machine learning has raised the risk that the rules may exhibit biases against historically marginalized communities. In particular, Kearns et al.~\cite{kearns2018preventing} raised the concern that these decision rules may be biased against sub-groups characterized by a combination of ``protected'' attributes. Since there are an exponential number of such subgroups, even detecting such statistical patterns of discrimination is a nontrivial computational problem; indeed, Kearns et al.~\cite{kearns2018preventing} showed that the problem of finding disadvantaged subgroups is equivalent to the problem of agnostic learning, which is believed to be intractable in general for all but the simplest classes of sets. Essentially all work \cite{kearns2018preventing,kim2018fairness,hebert2018multicalibration} on remedying statistical measures of discrimination against subgroups assumes access to an oracle for this problem, despite the fact that no efficient algorithms are known for it. In this work we are proposing a solution for a variant of the fairness auditing problem with provable guarantees of efficiency and correctness, as well as some strong limitations on the extent to which these solutions can be extended to richer families of subgroups.
    \subsection{Background and Motivation}
    Fairness learning has received massive attention in recent years. It turns out learning a fair classifier, in most cases, is equivalent to auditing \cite{kearns2018preventing,kim2018fairness,hebert2018multicalibration}. In particular, if auditing is possible, learning a fair classifier is easy. There are many successful examples of fairness learning with auditing over a relatively small number of predetermined subgroups \cite{agarwal2018reductions, wang2023scalable}. However, a small number of predetermined subgroups, in many cases, is not enough to cover all the natural subgroups.
    
    \begin{example}
        In the court case ``DeGraffenreid v General Motors" \cite{crenshaw2013demarginalizing}, five Black women brought suit against General Motors for its discrimination against the group of Black women. Although no sex discrimination was revealed, the evidence showed that Black women hired after 1970 were discriminated against by the company's seniority system. Such discrimination can be better demonstrated by an example shown in table~\ref{tbl:an-example-of-discrimination-against-subgroups}. In particular, the hiring rate of a company could seemingly be fair in terms of gender of race alone but clearly discriminates against the subgroups of white men and black women. As a result, the court rejected the plaintiffs' attempt to bring a suit not on behalf of Blacks or women, but specifically on behalf of Black women. In the ruling, in favor of the defendant, the judge was specifically concerned about the proliferation of protected classes. 
        
        \begin{table}[ht]
            \caption{an example of discrimination against subgroups}\label{tbl:an-example-of-discrimination-against-subgroups}
            \begin{center}
                \begin{tabular}{|c|c|c|c|}
                  \hline
                   & men&women& total\\
                  \hline
                  black & 50 & 0&50\\
                  \hline
                  white & 0 & 50&50\\
                  \hline
                  total&50&50&100\\
                  \hline
                \end{tabular}
            \end{center}      
        \end{table}
        % The decision says:
        
        % \begin{quote}
        %     \emph{``The prospect of the creation of new classes of protected minorities, governed only by the mathematical principles of permutation and combination, clearly raises the prospect of opening the hackneyed Pandora's box.''}
        % \end{quote}
    \end{example}
    
    More generally, a classifier may appear to be fair on each individual attribute, e.g., gender, race, age, incomes, etc., and yet perform unfairly on subgroups defined on multiple attributes, i.e., the conjunction of such attributes. In the case of \emph{DeGraffenreid v General Motors}, it is the conjunction of race and gender being discriminated against. The possible number of the conjunctions grows exponentially as the number of the ``protected'' attributes increases.
    
    Thereafter, \cite{kearns2018preventing} proposed more general notions of statistical fairness that require auditing over subgroups defined on simple combinations of data features. Specifically, such combinations of features can be any simple representations, such as conjunctions and halfspaces, which, however, can generate exponentially many subgroups. They also showed that the problem of auditing subgroups defined by such simple representation is as hard as ``weak agnostic learning'' in the standard ``distribution-free'' setting \cite{haussler1992decision,kearns1994toward}. While the problem of distribution-free weak agnostic learning is widely believed to be computationally intractable \cite{kearns1994toward, feldman2009agnostic}, its hardness does not necessarily hold for specific distribution families. Thus, it is natural to consider auditing using distribution-specific agnostic learning approaches as agnostic learning is a much more extensively studied problem. However, it turns out there are still obstacles remaining for doing so. 
    
    \subsection{Challenges of Auditing through Agnostic Learning}
    % \begin{definition}[Weak Agnostic Learning]\label{def:weak-agnostic-learning}
    %     Let $\D$ be a distribution over $\mathcal{X}\times\lbr{0,1}$ and let $\varepsilon,\varepsilon'\in(0, 1/2)$ such that $\varepsilon\geq \varepsilon'$. We say that the function class $\GI$ is $(\varepsilon,\varepsilon')$-weakly agnostically learnable under distribution $\D$ if there exists an algorithm $\algo$ such that when given sample access to $\D$, $\algo$ runs in time $\mathrm{poly}(1/\varepsilon', 1/\delta)$, and with probability $1-\delta$, outputs a hypothesis $g\in\GI$ such that
    %     \[
    %       \min_{f\in\GI}err_\D(f)\leq 1/2 - \varepsilon\Rightarrow err_\D(g)\leq 1/2 - \varepsilon'  
    %     \]
    %     where $err_\D(g) = \Pr_{(\xscript,y)\sim \D}\lbr{g(\xscript)\neq y}$.
    % \end{definition}
    % \begin{theorem}[\cite{kearns2018preventing} (Informal)]
    %     The problem of statistically auditing subgroups of a data distribution determined by halfspaces for arbitrary data distributions is as hard as weak agnostic learning. 
    % \end{theorem}
    
    The main challenge that prevents us from applying existing agnostic learning techniques to perform auditing based on the reduction by \cite{kearns2018preventing} is that it is formulated in terms of weak agnostic learning, that is, finding classifiers with error rates that are nonnegligibly better than guessing, and correspondingly weak auditing guarantees. In particular, the approximation guarantees we obtain for distribution-specific agnostic learning yield vacuous guarantees for weak learning. 
%What we would like instead is for the unfairness certificate to be near-optimal, so that the unfairness it exhibits certifies a bound on the unfairness of the classifier for all subgroups. 
    When we have guarantees for arbitrary distributions, ``boosting'' \cite{schapire1990strength} enables us to obtain high accuracy from such weak learners. 
    Unfortunately, these techniques require re-weighting the data examples after which the distribution-specific properties may no longer hold.% \begin{remark}
    %     Although the equivalence between fairness auditing and weak agnostic learning presented in this line of work looks promising for finding an efficient auditing algorithm for the same nice distributions, it is not immediately clear how this can be done. The reduction from auditing to agnostic learning by \cite{kearns2018preventing} was formulated in a boosting style that requires re-weighting the data examples after which log-concavity of distribution may no longer hold. 
    % \end{remark}
    
    One might hope to dodge this issue by casting the problem of finding a harmed subgroup as a Mixed-Integer Program and using solvers that, though they lack polynomial-time guarantees, obtain adequate performance in practice. In such an approach, the failure of the solver to find a feasible solution to the optimization problem is taken as the proof that the classifier is fair. Unfortunately, these solvers owe their speed in part to a lack of soundness, both due to numerical issues \cite{cook2013hybrid} and the complexity of the heuristics used to prune the search \cite{akgun2018metamorphic,gillard2019solvercheck}, and it remains a current research challenge to obtain acceptable performance (using the various advanced techniques employed by commercial solvers) while retaining the guarantee that the solver correctly reports infeasibility \cite{bogaerts2023certified}. In any case, the works by \cite{kearns2018preventing,kearns2019empirical} and \cite{kim2019multiaccuracy} that empirically studied these approaches to obtaining fair classifiers used linear regression as a proxy for the agnostic learning or cost-sensitive classification subroutines. Unfortunately, these heuristics do not even provide in-principle guarantees.
    
    In this paper, we will show auditing general halfspace subgroups is hard even under Gaussian distribution and present an alternative auditing approach for subgroups determined by homogeneous halfspaces with provable guarantees.
    
    \subsection{Our Contribution}
    Our first contribution is a more careful analysis of the relationship between auditing and agnostic learning:
    %by formulating the auditing problem as a sequence of non-convex optimization problems. Specifically, we will show that this sequence of optimization programs charaterizes the most unfair subgroup and solving each individual optimization problem is equivalent to learning a subclass of halfspaces in agnostic setting. 
    Given a fixed positive classification rate, the harm (w.r.t.\ statistical parity) suffered by a subgroup is affinely related to the error rate of the subgroup indicator. Thus, a solution to the agnostic learning problem directly gives a harmed subgroup.
    It is worth noting that under a standard normal distribution, the subclass of halfspaces with a fixed positive classification rate is given by the halfspaces with unit normal vectors and the same threshold.
    
    \begin{remark}
        Our reduction to learning halfspaces with fixed positive classification rates can achieve arbitrarily high precision auditing and does not rely on re-weighting data examples or make any assumptions on the potentially unfair classifiers. This enables the use of the existing distribution-specific agnostic learning methods for auditing. 
        %For example, \cite{diakonikolas2022learning} have already shown that agnostic learning of halfspaces is possible for Gaussian distributions, even though their multiplicative error guarantee is too weak to fit in our reduction scheme. 
    %Therefore, existing distribution-specific agnostic learning approaches might be applied to solve the auditing algorithm.
    \end{remark}
    
    Based on the reduction and a inspiration from Diakonikolas et al.~\cite{diakonikolas2023near}, our second major contribution is a lower bound on the unfairness detectable when auditing for halfspace subgroups under Gaussian distributions by reducing the problem of continuous Learning With Errors (cLWE) to auditing. Our hardness results include both multiplicative and additive forms. More interestingly, we can further show that even ``nonconstructive auditing'' is hard, where we do not need to exhibit a discriminated subgroup for a failed audit.
    
    For our algorithmic results, we will present a general auditing framework given an oracle for (distribution-specific) agnostic learning. Also, we give a randomized PTAS auditing algorithm for subgroups determined by homogeneous halfspaces under Gaussian data by applying the method from Diakonikolas et al.~\cite{diakonikolas2021agnostic}. 
    
    \begin{remark}
        We stress that a PTAS for auditing subgroups defined by homogeneous halfspaces for Gaussian distributions is, in fact, the best guarantee we know so far, hence, not trivial.
    \end{remark}
    
    At first blush, the reliance on a (prima facie unverifiable) distributional assumption for the analysis of our auditing algorithm may seem to be at odds with our desire to certify the fairness of a classifier. Nevertheless, a line of recent works by Rubinfeld and Vasilyan~\cite{rubinfeld2023testing} and Gollakota et al.~\cite{gollakota2023moment} have shown that the properties of the data that are crucial to these algorithms for distribution-specific learning of halfspaces \emph{can be verified}. Thus, these methods give a way of certifying fairness for families of nice distributions: so long as the data passes these tests and the audit reveals no subgroup that is significantly harmed, we may \emph{guarantee that the classifier is fair}.
    
    This paper will be organized as follow. Some necessary background for our arguments are given in Section 2. We will present the main reduction from auditing to agnostic learning in Section 3. Then, we will show the hardness results in Section 4. Section 5 will present out auditing framework as well as the distribution-specific PTAS algorithm. At last, we will discuss the limitation of our approach and our future work.
    
    \subsection{Related Work}
    %\textbf{Fairness Learning.} 
    Many authors have considered the problem of ensuring fairness in classification, and Barocas et al.~\cite{barocas-hardt-narayanan} give a good overview of the broader area. In particular, there are alternatives to the statistical, group-fairness notions we are considering, for example individual-level fairness as proposed by Dwork et al.~\cite{dwork2012fairness}, or based on causal modeling, such as the ``counterfactual'' fairness notion proposed by Kusner et al.~\cite{kusner2017}. We cannot do justice to the breadth of literature and philosophical issues here, and we strongly encourage the interested reader to consult Barocas et al. The group-fairness notions we consider have their roots in the game-theory-based approach of Kearns et al.~\cite{kearns2018preventing} for learning representations with subgroup fairness by assuming there exists an efficient oracle for auditing. A follow-up study \cite{kearns2019empirical} evaluated their algorithm on real-world datasets. H\'{e}bert-Johnson et al.~\cite{hebert2018multicalibration} showed a method of obtaining ``multi-accurate'' representations by assuming the existence of an efficient auditing oracle. Further, Kim et al.~\cite{kim2018fairness} proposed a variant of statistical fairness called ``multi-fairness,'' which allows them to efficiently learn a multi-fair classifier with querying ``relative fairness'' of data pairs. As we discussed previously, the auditing oracles in these works were provided by using linear regression as a heuristic for the optimal halfspace, which does not provide guarantees. They also did not consider auditing for specific families of distributions. On the other hand, the works on agnostic learning for specific families of distributions, e.g., \cite{kalai2008agnostically,diakonikolas2020non,diakonikolas2021agnostic,diakonikolas2022learning,frei2021agnostic} do not consider how their techniques may be applied to the subgroup fairness auditing problem.

\section{PRELIMINARIES}
    We use lowercase bold font characters to represent real vectors and subscripts to index the coordinates of each vector, e.g., $\xscript<i>$ represents the $i$-th coordinate of vector $\xscript$. We denote the $l_p$-norm by $\norm{\xscript}_p = \sbr{\sum_i\xscript<i>(p)}^{1/p}$, and $\xscript* = \xscript/\norm{\xscript}_2$. We model each individual as a vector of protected attributes, i.e., $\xscript\in\mathcal{X}$.
    % and $\mathbf{x}'\in\mathcal{X}'$ denotes a vector of unprotected attributes. However, since auditing only involves protected attributes, we will just use $\xscript$ to represent data attributes in the rest of the paper.
    
    Further, the probability of an event under a distribution $\D$ is denoted by $\prob<\xscript\sim\D>{\cdot}$. $\gaussian(0, \identity)$ denotes a standard normal distribution, where $\identity$ represents the identity matrix. For simplicity of notation, we may use $\gaussian, \gaussian_\sigma$ instead of $\gaussian(0, \identity), \gaussian(0, \sigma^2\identity)$ or even drop $\D$ and $\gaussian$ from the subscript when it is clear from the context. 
    \begin{fact}[Rotational Invariance]\label{fac:gaussian-rotational-invariance-property}
        For any real vector $\uscript$, if $\xscript\sim\gaussian(0, \identity)$, then $\uscript*(\transpose) \xscript\sim \gaussian(0, 1)$.
    \end{fact}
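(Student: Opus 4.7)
My plan is to observe that $\uscript* = \uscript/\norm{\uscript}_2$ is, by construction, a unit vector, and then to invoke the standard closure of Gaussians under linear combinations of independent standard normals. I would write $\uscript*(\transpose) \xscript = \sum_i \uscript*<i>\, \xscript<i>$, exhibiting the quantity as a linear combination of the i.i.d.\ coordinates of $\xscript \sim \gaussian(0, \identity)$, so the sum is itself Gaussian. Its mean is immediately $0$ since each $\xscript<i>$ has mean $0$, and by independence of the coordinates its variance equals $\sum_i (\uscript*<i>)^2 = \norm{\uscript*}_2^2 = 1$.

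To pin down Gaussianity (rather than just the first two moments), I would perform a direct moment-generating-function computation: $\E\!\mbr{\exp(t\, \uscript*(\transpose)\xscript)} = \prod_i \E\!\mbr{\exp(t\, \uscript*<i>\, \xscript<i>)} = \prod_i \exp\!\sbr{t^2 (\uscript*<i>)^2 / 2} = \exp\!\sbr{t^2 \norm{\uscript*}_2^2 / 2} = \exp(t^2/2)$, which is exactly the MGF of $\gaussian(0, 1)$, and appeal to MGF uniqueness. An equally short alternative would be to extend $\uscript*$ to an orthonormal basis, assembling an orthogonal matrix $R$ whose first row is $\uscript*(\transpose)$; then $R \xscript \sim \gaussian(0, R R^\transpose) = \gaussian(0, \identity)$, so the first coordinate of $R \xscript$, which equals $\uscript*(\transpose) \xscript$, is marginally $\gaussian(0, 1)$. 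I do not anticipate any genuine obstacle here: the only subtlety is recognizing that the star notation already bakes in the $l_2$-normalization, at which point the statement collapses to a textbook Gaussian fact.
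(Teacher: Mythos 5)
Your proof is correct: the paper states this as a standard Fact with no proof of its own, and your argument (unit-normalizing $\mathbf{u}$, then verifying Gaussianity of the linear combination via the moment-generating function, or equivalently embedding $\bar{\mathbf{u}}^{\top}$ as the first row of an orthogonal matrix $R$ and using $R\mathbf{x}\sim\gaussian(0,\identity)$) is exactly the textbook justification one would supply. The only pedantic caveat is that $\mathbf{u}$ must be nonzero for the normalization $\bar{\mathbf{u}}=\mathbf{u}/\norm{\mathbf{u}}_2$ to make sense, which is implicit in the statement.
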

    
    To understand the problem of fairness auditing, it is necessary to define fairness or unfairness precisely. In this work, we focus on the notion of Statistical Parity Subgroup Fairness (SPSF). Formally, we have the following definition.
    
    \begin{definition}[Statistical Parity Subgroup Fairness]\label{def:SP-subgroup-unfairness-definition}
        Fix any binary classifier $c\in\C$ such that $c:\R^d\rightarrow\lbr{-1, +1}$, data distribution $\D$, collection of subgroups $\GI$, and parameter $\gamma\in[0,1]$. Define
        \begin{equation}
            d_\D(c,g) = \prob<\xscript\sim\D>{c(\xscript)=1} - \prob<\xscript\sim\D>{c(\xscript)=1\cond \xscript\in g}\label{eq:definition-of-deviation}
        \end{equation}
        We say that g does not satisfy $\gamma$-statistical parity fairness (or is $\gamma$-unfair) with respect to $\D$ and $\GI$, if $\exists g\in \GI$ such that
        \begin{equation}
          \prob<\xscript\sim\D>{\xscript\in g}\abs{d_\D(c,g)}\geq \gamma\label{eq:SP-subgroup-unfairness-definition}
        \end{equation} 
    \end{definition}
    
    Equation \eqref{eq:definition-of-deviation} is a straightforward way to quantify how much the positive classification rate within a subgroup deviates from that of the overall population. The weighting by the size of the group (i.e., $\prob<\xscript\sim\D>{\xscript\in g}$) is a concession to address the statistical issues that arise with estimating $d$ on small groups: we cannot escape that our empirical estimates are less accurate as the size shrinks.
    % Note that, the ``unfairness level" represented as a weighted deviation in inequality \eqref{eq:SP-subgroup-unfairness-definition} is insensitive to statistically small subgroups. This is because subgroups that are statistically small are more likely to have high deviation, e.g., a group of a single point. 
    % Each group indicator $g$ that satisfies equation~\eqref{eq:SP-subgroup-unfairness-definition} can be viewed as a $\gamma$-unfair certificate for the classifier $c$ on $\D$. 
    The goal of fairness auditing is to develop an ``auditing algorithm" to efficiently find such a certificate $g\in\GI$ for any $c\in\C$ with sample access to $\D$, formalized as follows.
    
    \begin{definition}[Constructive Auditing \cite{kearns2018preventing}]\label{def:auditing-algorithm-for-sp-subgroup-fairness}
        Fix a collection of group indicators $\GI$ over the protected features, and any $\delta, \gamma,\gamma'\in(0,1)$ such that $\gamma'\leq \gamma$. A constructive $(\gamma, \gamma')$-auditing algorithm for $\GI$ with respect to distribution $\D$ is an algorithm $\algo$ such that for any classifier $h$, when given access the joint distribution $(\D, h(\D))$, $\algo$ runs in time $\mathrm{poly}(1/\gamma', \log(1/\delta))$, and with probability $1-\delta$, outputs a $\gamma'$-unfair certificate for $h$ whenever $h$ is $\gamma$-unfair with respect to $\D$ and $\GI$. If $h$ is $\gamma'$-fair, $\algo$ will output ``fair''.
    \end{definition}

    Moreover, we will consider a more general type of auditing task, called ``non-constructive auditing'', where the algorithms are only required to tell if a discriminated subgroup exists.

    \begin{definition}[Non-constructive Auditing]\label{def:non-constructive-auditing}
        Under the same setting as Definition \ref{def:auditing-algorithm-for-sp-subgroup-fairness}, a non-constructive $(\gamma, \gamma')$-auditing algorithm for $\GI$ with respect to distribution $\D$ is an algorithm $\algo$ such that for any classifier $h$, when given access the joint distribution $(\D, h(\D))$, $\algo$ runs in time $\mathrm{poly}(1/\gamma', \log(1/\delta))$, and with probability $1-\delta$, claims $h$ is $\gamma'$-unfair whenever $h$ is $\gamma$-unfair with respect to $\D$ and $\GI$. If $h$ is $\gamma'$-fair, $\algo$ will output ``fair''.
    \end{definition}
    
    Since our reduction involves the subclass of halfspace subgroups of a fixed size, we give the formal definition of it as follows.
    \begin{definition}[Fixed-size Halfspaces]\label{def:halfspaces-with-proabilistic-lock}
        We use $\HS^d$ to represent the collection of all the halfspaces in $\R^d$. Then, for any arbitrary distribution $\D$ over $\R^d$, we define the collection of all halfspaces with the same (relative) density $\mu$ as
        \begin{equation}
            \HS_{\mu}^\D:=\lbr{h\in\HS^d\cond \Pr_{\xscript\in \D}\lbr{h(\xscript) = 1} = \mu }
        \end{equation}
        % where $h(\xscript)=\sgn(\vscript*<>(\transpose)\xscript - t)$ and
        % \begin{equation*}
        %     \sgn(x)=
        %     \begin{cases}
        %         +1&\quad\text{if } x\geq 0\\
        %         -1&\quad\text{otherwise}
        %     \end{cases}
        % \end{equation*}
        % Further, we denote the set of all the halfspaces in the $d$-dimension real space by $\HS_\D^d:= \cup_{\mu\geq 0}\HS_\D^d(\mu)$.
    \end{definition}
    For conciseness, we may abbreviate $\Pr\lbr{f(\xscript) = 1}$, $\Pr\lbr{f(\xscript) = -1}$ to simply $\Pr\lbr{f}$, $\Pr\lbr{\neg f}$ for any binary output functions $f:\mathcal{X}\rightarrow\lbr{-1, +1}$ when it is necessary for the rest of the paper.
    
    To state the hardness results, we denote $\Sphere^{d-1}:=\lbr{\xscript\in \R^d \cond \norm{\xscript}_2= 1}$, $\Z_q:=\lbr{0, 1, \ldots, q-1}$, $\R_q:=[0, q)$, and $\mod_q:\R^d\rightarrow \R_q$ by the unique translation by $q\mathbb{Z}^d$ for $q\in\N$. Then we formally define the problem of ``learning with errors'' (LWE) \cite{regev2009lattices}, following \cite{diakonikolas2023near}:
    \begin{definition}[Learning With Errors]
        For $m,d\in\N$, $q\in\R_+$, let $\D_{sp},\D_{st},\D_{ns}$ be distribution on $\R^d, \R^d, \R$ respectively. In the LWE$(m, \D_{sp},\D_{st},\D_{ns}, \mod_q)$ problem, with $m$ independent samples $(\xscript,y)$, we want to distinguish between the following two cases:
        \begin{itemize}
            \item \textbf{Alternative hypothesis}: $(\xscript, y)$ is generated as $y= \mod_q(\sscript(\transpose)\xscript + z)$, $\xscript\in\D_{sp}, \sscript\in\D_{st}, z\in\D_{ns}$.
            \item \textbf{Null hypothesis}: $\xscript\in\D_{sp}$,  $y$ is sampled uniformly at random on the support of its marginal distribution in alternative hypothesis, independent of $\xscript$.
        \end{itemize}
    \end{definition}
    This problem is widely believed to be computationally hard, formalized as follows.
    \begin{assumption}[Sub-exponential LWE Assumption]\label{asp:sub-exponential-assumption-of-lwe}
        For $q,\kappa\in\N, \alpha\in(0, 1)$ and $C > 0$ being a sufficiently large constant, the problem LWE$(2^{O(n^\alpha)}, \Z_q^d, \Z_q^d, \gaussian_\sigma, \mod_q)$ with $q\leq d^{\kappa}$ and $\sigma = C\sqrt{d}$ cannot be solved in $2^{O(d^\alpha)}$ time with $2^{O(-d^\alpha)}$ advantage.
    \end{assumption}

\section{FROM AUDITING TO AGNOSTIC LEARNING}
    In this section, we describe our reduction from auditing to agnostic learning. In addition, we give a lower bound for fairness auditing under Gaussian distributions.
    
    \subsection{Reduction from Auditing to Halfspace Learning}
    
    We are considering the auditing problem w.r.t.\ SPSF as in definition \ref{def:SP-subgroup-unfairness-definition}, which naturally rules out the statistically small subgroups. Indeed, if the probability of accessing the data of certain sub-population is exponentially small, it is computationally hard to even estimate their deviation. Therefore, it makes sense to just consider the collection of subgroups $\GI$ that are statistically large enough, e.g., $\Pr\lbr{\xscript\in g}=\Theta(1)$ for $\xscript\in\R^d$.
    
    Based on the observation, the following optimization program, $\Pg_{a,b}^\D(\HS^d)$, can capture the most unfair subgroup which is also statistically significant enough. That is
    \begin{align}
        \max_{g\in\GI}\quad &\Pr_{\xscript\in\D}\lbr{\xscript\in g}\abs{d_\D(c, g)}\notag\\
        s.t.\quad &a \leq\Pr_{\xscript\in\D}\lbr{\xscript\in g}\leq b\label{eq:original-optimization-program}
    \end{align}
    for some constants $0 < a\leq b < 1$. 
    
    Furthermore, if we only consider the subgroups represented by halfspaces, there exists a simple reduction from $\Pg_{a,b}^\D(\HS^d)$ to agnostic learning that, in particular, preserves the properties of the data distribution. We show our reduction as the following theorem.
    
    \begin{theorem}[Main Reduction]\label{thm:main-reduction}
        Given any binary classifier $c:\R^d\rightarrow\lbr{-1, +1}$, and a data distribution $\D$ over $\R^d$ whose 1-dimensional marginals have continuous cumulative distribution functions, if there exists an efficient algorithm for learning $\HS_\mu^\D$ in the agnostic model on distribution $\D$, then there is an efficient auditing algorithm for $c$ on subgroups represented by $\HS^d$ over distribution $\D$.
    \end{theorem}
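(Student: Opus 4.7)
The key observation is an algebraic identity: for any $g\in \HS^d$ with $\Pr_{\xscript\sim\D}[\xscript\in g]=\mu$, expanding the definition of $d_\D(c,g)$ yields
\[
\mu\cdot d_\D(c,g)\;=\;\Pr[c=1]\bigl(\mu-\tfrac{1}{2}\bigr)-\tfrac{\mu}{2}+\tfrac{1}{2}\Pr\bigl[c(\xscript)\neq g(\xscript)\bigr].
\]
Hence for fixed $\mu$, up to the additive constant $\Pr[c=1](\mu-\tfrac{1}{2})-\tfrac{\mu}{2}$ (which is data-estimable), the weighted deviation $\mu\cdot d_\D(c,g)$ is an affine function of the disagreement $\Pr[c\neq g]$. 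Minimizing disagreement with $c$ over $g\in\HS_\mu^\D$ therefore finds the halfspace of size $\mu$ with the most \emph{negative} deviation, while minimizing disagreement with $-c$ finds the most \emph{positive} one. Under the continuous 1-dim marginal hypothesis, $\HS_\mu^\D$ is non-empty for every $\mu\in(0,1)$, so the agnostic learner is always applicable.

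The auditing algorithm proceeds as follows. Since $\mu|d_\D(c,g)|\le\mu(1-\mu)$, any halfspace subgroup with harm at least $\gamma$ must satisfy $\mu\in[\Omega(\gamma),\,1-\Omega(\gamma)]$. I discretize this interval with an $\epsilon$-net $\mu_1,\ldots,\mu_N$ of size $N=O(1/\epsilon)$. For each $\mu_i$ I invoke the agnostic learner for $\HS_{\mu_i}^\D$ twice --- once targeting $c$ and once targeting $-c$ --- producing $2N$ candidate halfspaces. I then estimate each candidate's weighted deviation from fresh samples and output the one with the largest estimate if it exceeds $\gamma'$, or ``fair'' otherwise. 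The sample complexity and running time remain $\mathrm{poly}(1/\gamma',\log(1/\delta))$ by union-bounding over the $O(1/\epsilon)$ candidates.

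The main obstacle is showing that the maximum harm over the whole family $\HS^d$ is well-approximated by the maximum over $\bigcup_i \HS_{\mu_i}^\D$. Given an optimal $g^*$ of density $\mu^*$, let $\mu_i$ be its nearest grid point. Because the 1-dim marginal of $\D$ along $g^*$'s normal vector has a continuous CDF, I can shift the threshold of $g^*$'s defining hyperplane to obtain a parallel halfspace $g'\in\HS_{\mu_i}^\D$ whose symmetric difference with $g^*$ has $\D$-measure at most $|\mu_i-\mu^*|\le\epsilon$; consequently $|\Pr[c\neq g']-\Pr[c\neq g^*]|\le\epsilon$, and by the affine identity the harm of $g'$ differs from that of $g^*$ by $O(\epsilon)$. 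Choosing $\epsilon$ small enough in terms of $\gamma-\gamma'$, and propagating the agnostic learner's approximation error together with the empirical-estimation error through the affine identity, then union bounding over the grid, yields the claimed $(\gamma,\gamma')$-auditing guarantee.
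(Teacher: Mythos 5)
Your proposal is correct and follows essentially the same route as the paper: your closed-form identity $\mu\, d_\D(c,g)=\Pr[c=1](\mu-\tfrac12)-\tfrac{\mu}{2}+\tfrac12\Pr[c\neq g]$ is just a rearrangement of the paper's Lemma~\ref{lma:relationship-between-auditing-and-classification-error}/Corollary~\ref{cor:optimal-agnostic-learning-implies-optimal-auditing}, your grid over $\mu$ with the threshold-shifting argument via the continuous 1-dimensional CDF is exactly Proposition~\ref{prop:approximating-auditing-with-simple-optimization}, and your two oracle calls per grid point with labels $c$ and $-c$ followed by empirical selection is the paper's Lemma~\ref{lma:reducing-auditing-to-agnostic-learning} and Algorithm~\ref{alg:fairness-auditing}. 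The only (harmless) addition is your observation that $\mu|d_\D(c,g)|\le\mu(1-\mu)$ justifies restricting to $\mu\in[\Omega(\gamma),1-\Omega(\gamma)]$, where the paper simply fixes constants $a\le b$.
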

    We delay the proof of the above theorem to the end of this section, and show two fundamental hurdles we need to overcome in order to prove Theorem \ref{thm:main-reduction}.
    
    \begin{remark}
        While learning from a representation class like $\HS_\mu^\D$ may seems to be hard at a first glance, there are actually examples \cite{diakonikolas2022learning} of learning $\HS_\mu^\D$ in an agnostic setting under Gaussian data. 
        %Actually, under Gaussian distribution, it is even simpler to consider iteratively learning $\HS_\mu^\D$ with fixed $\mu$, because fixing the threshold allows only optimizing the direction of normal vector due to fact \ref{fac:gaussian-rotational-invariance-property} (rotational invariance), which resembles to the task of learning homogeneous halfspaces. 
    \end{remark}
    
    Instead of starting from the optimization problem \eqref{eq:original-optimization-program}, it turns out that solving a sequence of simpler optimization problems suffices to certify the $\gamma$-unfairness as stated in Definition~\ref{def:SP-subgroup-unfairness-definition}. We prove the equivalence as the following proposition.
    \begin{proposition}\label{prop:approximating-auditing-with-simple-optimization}
        Consider any
        binary classifier $c:\R^d\rightarrow\lbr{-1, +1}$,
        any data distribution $\D$ over $\R^d$ whose 1-dimensional marginals have continuous cumulative distribution functions,
        %%%
        and any $0 < a \leq b < 1$. % ADDED
        %%%
        %%%
        For each pair of non-negative integers $k < n$, % ADDED
        %%%
        let $\Pg_{a,b}^\D(k, n)$ denote the optimization program
        \begin{align*}
            \max_{h\in\HS^d}\quad &\Pr_{\xscript\in\D}\lbr{h(\xscript) = 1}\abs{d_\D(c, h)}\\
            s.t.\quad &\Pr_{\xscript\in\D}\lbr{h(\xscript) = 1}=a + \frac{k(b-a)}{n}
            .
        \end{align*}
        %%%
        %for some constants $0 < a\leq b<1$ and positive integers $n \geq k$. % REMOVED
        %%%
        Let $h^*$ be a global optimizer of $\Pg_{a,b}^\D(\HS^d)$, as defined in \eqref{eq:original-optimization-program}, and let $\gamma^* = \Pr\lbr{ h^*}\abs{d_\D(c, h^*)}$.
        For each $k=0, \ldots, n$, let $h_k^*$ be a global optimizer of $\Pg_{a,b}^\D(k, n)$.
        Then
        \begin{equation*}
            \max_k \Pr\lbr{ h_k^*}\abs{d_\D(c, h_k^*)} \geq \gamma^* - \frac{2(b-a)}{n}
            .
        \end{equation*}%
        %%%
        %for all $0<a\leq b< 1$. % REMOVED
        %%%
    \end{proposition}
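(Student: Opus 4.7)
The plan is to perturb a global optimizer of $\Pg_{a,b}^\D(\HS^d)$ into a feasible competitor for one of the discrete programs $\Pg_{a,b}^\D(k,n)$ by shifting only the halfspace threshold, and then to bound the resulting change in the objective. Write the optimizer as $h^*(\xvector) = \sgn(\wvector^\transpose \xvector - t^*)$ and let $p^* := \Pr\lbr{h^*} \in [a,b]$. Choose $k \in \lbr{0,\ldots,n}$ so that $p_k := a + k(b-a)/n$ satisfies $|p^* - p_k| \leq (b-a)/n$. Since $0 < a \leq p^* \leq b < 1$ forces $\wvector \neq 0$, and the 1-dimensional marginal of $\wvector^\transpose \xvector$ inherits a continuous CDF from the hypothesis on $\D$, the map $t \mapsto \Pr\lbr{\wvector^\transpose \xvector \geq t}$ is continuous, non-increasing, and sweeps all of $(0,1)$. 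Hence there exists a threshold $t_k$ with $\Pr\lbr{h_k} = p_k$, where $h_k(\xvector) := \sgn(\wvector^\transpose \xvector - t_k)$.

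Next I would rewrite the objective via the identity
\begin{equation*}
\Pr\lbr{h}\, d_\D(c,h) = \Pr\lbr{h}\Pr\lbr{c=1} - \Pr\lbr{c=1 \wedge h},
\end{equation*}
and bound the change in each of the two terms on the right when $h^*$ is replaced by $h_k$. The first term changes by $(p^* - p_k)\Pr\lbr{c=1}$, whose magnitude is at most $|p^* - p_k|$. For the second term, observe that since $h^*$ and $h_k$ share the same normal $\wvector$, one is contained in the other, and their symmetric difference has probability exactly $|p^* - p_k|$; hence $\abs{\Pr\lbr{c=1 \wedge h^*} - \Pr\lbr{c=1 \wedge h_k}} \leq |p^* - p_k|$. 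Combining these with the reverse triangle inequality $\abs{|u| - |v|} \leq |u-v|$ gives
\begin{equation*}
\abs{\Pr\lbr{h^*}\abs{d_\D(c,h^*)} - \Pr\lbr{h_k}\abs{d_\D(c,h_k)}} \leq 2|p^* - p_k| \leq \frac{2(b-a)}{n}.
\end{equation*}

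Finally, since $h_k$ is feasible for $\Pg_{a,b}^\D(k,n)$, we have $\Pr\lbr{h_k^*}\abs{d_\D(c,h_k^*)} \geq \Pr\lbr{h_k}\abs{d_\D(c,h_k)} \geq \gamma^* - 2(b-a)/n$, and taking the max over $k$ yields the claimed bound. I do not anticipate a genuine obstacle here: the only delicate step is invoking the continuous-CDF hypothesis to justify that every probability in $(0,1)$ is realized by shifting the threshold, together with the careful bookkeeping of the two absolute values in the objective through the (reverse) triangle inequality. One minor point worth noting is that if the optimum $\gamma^*$ is only attained in the supremum sense, the same argument applies to a near-optimizer $h^*$ with $\Pr\lbr{h^*}\abs{d_\D(c,h^*)} \geq \gamma^* - \eta$ for arbitrary $\eta > 0$, which is then absorbed by letting $\eta \to 0$.
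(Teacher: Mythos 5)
Your proposal is correct and follows essentially the same route as the paper's proof: round $\Pr\lbr{h^*}$ to the nearest grid value, shift only the threshold of $h^*$ (using the continuous-CDF hypothesis) to obtain a feasible competitor, bound the objective change by twice the mass of the symmetric difference, and invoke the optimality of $h_k^*$. Your explicit use of the identity $\Pr\lbr{h}\,d_\D(c,h)=\Pr\lbr{h}\Pr\lbr{c=1}-\Pr\lbr{c=1\wedge h}$ is just a cleaner packaging of the paper's triangle-inequality step, so no substantive difference.
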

    \begin{proof}
        For conciseness of the proof, we define
        \begin{equation*}
             \alpha(k):= a + \frac{k(b-a)}{n}
        \end{equation*}
        Since $a\leq \Pr\lbr{h^*(\xscript) = 1}\leq b$ by definition, there must exists a $k\in\lbr{0, \ldots, n-1}$ such that
        \begin{equation*}
            \alpha(k) < \Pr\lbr{h^*(\xscript) = 1} < \alpha(k + 1)
        \end{equation*}
    
        Then, since we assumed that $\D$ has a continuous CDF w.r.t.\ the normal of $h^*$, we can construct another halfspace $h'$ by either increasing or decreasing the threshold of $h^*$ until $\Pr\lbr{\xscript\in h'}$ hits either $\alpha(k)$ or $\alpha(k + 1)$. We thus obtain 
        \begin{align}
            \Pr\lbr{h'(\xscript) \neq h^*(\xscript)} =& |\Pr\lbr{h^*} - \Pr\lbr{h'}|\notag\\
            \leq& \alpha(k+ 1) - \alpha(k)\notag\\
            =& \frac{(b-a)}{n}\label{eq:upper-bound-on-the-difference-of-positive-rate-between-parallel-halfspaces}
        \end{align}
        Let $\dom:=\lbr{\xscript\cond h'(\xscript) \neq h^*(\xscript)}$. Then, by the triangle inequality %$x - y\leq \abs{x - y}$ 
        and the fact that $\Pr\lbr{c(\xscript) = 1}\leq 1$, we have
        \begin{align}
            \abs{\Pr\lbr{h^*}d_\D(c, h^*)} - \abs{\Pr\lbr{h'}d_\D(c, h')}\leq& |\Pr\lbr{h^*} - \Pr\lbr{h'}| + |\Pr\lbr{h'\cap c} - \Pr\lbr{h^*\cap c}|\notag\\
            \cleq&\frac{(b-a)}{n} + |\Pr\lbr{h'\cap c\cap\dom}- \Pr\lbr{h^*\cap c\cap\dom}|\notag\\
            \leq& \frac{(b-a)}{n} + |\Pr\lbr{\xscript\in\dom}|\notag\\
            \leq& \frac{2(b-a)}{n} \label{eq:unfairness-difference-between-parallel-halfspaces}
        \end{align}
        where the second inequality is obtained by expanding $\Pr\lbr{h\cap c}$ on the event $\xscript\in\dom$ using the law of total probability and exploiting the fact that $h'$ always agrees with $h^*$ on the complement of $\dom$, i.e., $\Pr\lbr{h'\cap c\cap \dom^c} = \Pr\lbr{h^*\cap c\cap \dom^c}$; the third inequality holds because at most one of $h^*(\xscript) = 1$ and $h'(\xscript) = 1$ holds for any $\xscript\in\dom$ by definition; and the last inequality is due to equation \eqref{eq:upper-bound-on-the-difference-of-positive-rate-between-parallel-halfspaces}. 
        
        Finally, due to the optimality of $h_k^*$, we have
        \begin{align*}
            \Pr\lbr{h_k^*}\abs{d_\D(c, h_k^*)}\geq&\Pr\lbr{ h'}\abs{d_\D(c, h')} - \gamma^* + \gamma^*\\
            \geq&\gamma^* - \frac{2(b-a)}{n}
        \end{align*}
        by inequality \eqref{eq:unfairness-difference-between-parallel-halfspaces} with $\Pr\lbr{h^*(\xscript) = 1}\abs{d_\D(c, h^*)} = \gamma^*$.
    \end{proof}
    The reason why this proposition is so crucial is that it allows us to solve a simpler optimization problem without compromising the guarantee. Being able to fix $\Pr\lbr{h(\xscript) = 1}$ as a constant will significantly simplify the overall optimization as it reduces the degree of the optimization objective. In fact, it is because we can optimize $\Pr\lbr{h(\xscript) = 1}\abs{d_\D(c, h)}$ over $\HS_\mu^\D$ instead of $\HS^d$ that we can conduct the reduction from auditing to agnostic learning.
    
    The following lemma shows a direct relationship between the unfairness level and the classification error.
    \begin{lemma}\label{lma:relationship-between-auditing-and-classification-error}
        Given any binary classifier $c:\mathcal{X}\rightarrow\lbr{-1, +1}$, a data distribution $\D$ over $\mathcal{X}$ and a collection of subgroups $g\in\GI$ such that $g:\mathcal{X}\rightarrow \lbr{-1, +1}$, we have
        \begin{equation*}
            2\Pr\lbr{ g}d_\D(c, g) =\Pr\lbr{\neg c}\Pr\lbr{ \neg g} + \Pr\lbr{ c}\Pr\lbr{ g} - \Pr\lbr{c(\xscript) = g(\xscript)}
        \end{equation*}
        for $\xscript\sim\D$.
    \end{lemma}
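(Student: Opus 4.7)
The proof plan is purely algebraic: expand the left-hand side using the definition of $d_\D$, then massage the right-hand side via the law of total probability and an inclusion–exclusion identity until the two sides match. There is no probabilistic subtlety, just bookkeeping on the four atoms of the Boolean algebra generated by $c$ and $g$.

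First I would rewrite the left-hand side by unfolding $d_\D(c,g) = \Pr\lbr{c} - \Pr\lbr{c \cond g}$ and multiplying through by $\Pr\lbr{g}$:
\begin{equation*}
    2\Pr\lbr{g}\,d_\D(c, g) \;=\; 2\Pr\lbr{c}\Pr\lbr{g} - 2\Pr\lbr{c \cap g},
\end{equation*}
where I have used $\Pr\lbr{g}\Pr\lbr{c \cond g} = \Pr\lbr{c \cap g}$.

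Next I would reduce the right-hand side to the same expression. The key identity is
\begin{equation*}
    \Pr\lbr{c(\xscript) = g(\xscript)} \;=\; \Pr\lbr{c \cap g} + \Pr\lbr{\neg c \cap \neg g},
\end{equation*}
since $c,g \in \lbr{-1,+1}$ so they agree either on $\lbr{c=g=1}$ or on $\lbr{c=g=-1}$. Combined with the inclusion–exclusion identity $\Pr\lbr{\neg c \cap \neg g} = 1 - \Pr\lbr{c} - \Pr\lbr{g} + \Pr\lbr{c \cap g}$ and with $\Pr\lbr{\neg c}\Pr\lbr{\neg g} = (1-\Pr\lbr{c})(1-\Pr\lbr{g})$, the right-hand side becomes
\begin{equation*}
    (1-\Pr\lbr{c})(1-\Pr\lbr{g}) + \Pr\lbr{c}\Pr\lbr{g} - \bigl(1 - \Pr\lbr{c} - \Pr\lbr{g} + 2\Pr\lbr{c\cap g}\bigr),
\end{equation*}
and expanding the first product and cancelling the terms $\pm 1, \pm\Pr\lbr{c}, \pm\Pr\lbr{g}$ leaves exactly $2\Pr\lbr{c}\Pr\lbr{g} - 2\Pr\lbr{c \cap g}$, matching the left-hand side.

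There is no real obstacle here; the only thing worth being careful about is the encoding convention (the classifier and subgroup indicator take values in $\lbr{-1,+1}$ rather than $\lbr{0,1\rbr}$), so the statement ``$c(\xscript) = g(\xscript)$'' must be interpreted as the disjoint union $\lbr{c=g=+1} \cup \lbr{c=g=-1}$, which is what justifies the decomposition of $\Pr\lbr{c(\xscript)=g(\xscript)}$ above. Once that is clear, the rest is a one-line cancellation.
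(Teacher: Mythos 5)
Your proof is correct and in essence the same as the paper's: both rest on the agreement decomposition $\Pr\lbr{c(\xscript)=g(\xscript)} = \Pr\lbr{c\cap g} + \Pr\lbr{\neg c\cap \neg g}$ together with the total-probability/inclusion--exclusion identity relating $\Pr\lbr{\neg c\cap\neg g}$ to $\Pr\lbr{c\cap g}$. The paper merely organizes the algebra differently---deriving a second, ``negated'' expression for $d_\D(c,g)$ and summing the two forms---while you verify the identity by expanding both sides to $2\Pr\lbr{c}\Pr\lbr{g} - 2\Pr\lbr{c\cap g}$; the content is identical.
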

    \begin{proof}
        By the law of total probability, we have
        \begin{equation*}
            \Pr\lbr{ c\cap  g}=\Pr\lbr{ g} - (\Pr\lbr{\neg c} - \Pr\lbr{\neg c\cap  \neg g}).
        \end{equation*}
        which along with definition \ref{def:SP-subgroup-unfairness-definition} gives
        \begin{align}
            d_\D(c, g) =&\Pr\lbr{ c}- \Pr\lbr{c \cond g}\notag\\
            =&\frac{\Pr\lbr{ c}\prob{g}- \Pr\lbr{c \cap g}}{\prob{g}}\notag\\
            =&\frac{\Pr\lbr{\neg c}\Pr\lbr{ \neg g} - \Pr\lbr{\neg c\cap  \neg g}}{\Pr\lbr{ g}}.\label{eq:negation-invariance-of-deviation-under-gaussian}
        \end{align}
        Summing up the two different forms of $d_\D(c, g)$ results to
        \begin{align}
            2d_\D(c, g) =&\frac{\Pr\lbr{\neg c}\Pr\lbr{ \neg g} - \Pr\lbr{\neg c\cap  \neg g}}{\Pr\lbr{ g}} + \frac{\Pr\lbr{ c}\prob{g}- \Pr\lbr{c \cap g}}{\prob{g}}\notag\\
            =&\frac{\Pr\lbr{\neg c}\Pr\lbr{ \neg g} + \Pr\lbr{ c}\prob{g} - (\Pr\lbr{\neg c\cap  \neg g} + \Pr\lbr{c \cap g})}{\prob{g}}\label{eq:summation-of-two-different-forms-of-deviation}
        \end{align}
        Notice that, because $c\cap g$ and $\neg c\cap \neg g$ are two disjoint events, we have
        \begin{align*}
            \Pr\lbr{c(\xscript) = g(\xscript)}=& \Pr\lbr{( c\cap g)\cup(\neg c\cap \neg g)}\\
            =&\Pr\lbr{ c\cap g} + \Pr\lbr{\neg c\cap \neg g}
        \end{align*}
        Plugging it back to equation \eqref{eq:summation-of-two-different-forms-of-deviation} produces the desired result.
    \end{proof}
    This immediately implies the duality between SPSF auditing and agnostic learning as follow.
    \begin{corollary}\label{cor:optimal-agnostic-learning-implies-optimal-auditing}
        Given any binary classifier $c:\R^d\rightarrow\lbr{-1, +1}$, a data distribution $\D$ and a collection of halfspaces $\HS_\mu^\D$ over $\R^d$, we have the following two properties
        \begin{enumerate}
            \item[(1)] $d_\D(c, h^*)\geq d_\D(c, h), \forall h\in\HS_\mu^\D$ if and only if $h^* = \argmin_{h\in\HS_\mu^\D}\Pr_{\xscript\sim\D}\lbr{c(\xscript) = h(\xscript)}$
            \item[(2)] $d_\D(c, h^*)\leq d_\D(c, h), \forall h\in\HS_\mu^\D$ if and only if $h^* = \argmax_{h\in\HS_\mu^\D}\Pr_{\xscript\sim\D}\lbr{c(\xscript) = h(\xscript)}$
        \end{enumerate}
    \end{corollary}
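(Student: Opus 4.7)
The plan is to invoke Lemma \ref{lma:relationship-between-auditing-and-classification-error} and exploit the defining property of $\HS_\mu^\D$. For any $h \in \HS_\mu^\D$, the probabilities $\Pr\lbr{h} = \mu$ and $\Pr\lbr{\neg h} = 1 - \mu$ are fixed constants independent of the particular $h$, and the quantities $\Pr\lbr{c}$, $\Pr\lbr{\neg c}$ depend only on $c$ and $\D$. Substituting these into the lemma's identity and solving for $d_\D(c,h)$ yields
\[
d_\D(c, h) = \frac{1}{2\mu}\bigl[(1-\mu)\Pr\lbr{\neg c} + \mu\Pr\lbr{c} - \Pr\lbr{c(\xscript) = h(\xscript)}\bigr],
\]
which is an affine function of the agreement probability $\Pr\lbr{c(\xscript) = h(\xscript)}$ with slope $-1/(2\mu) < 0$.

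From this strict monotonicity, both parts of the corollary are immediate. For (1), an $h^*$ satisfies $d_\D(c,h^*) \geq d_\D(c,h)$ for every $h \in \HS_\mu^\D$ exactly when it minimizes $\Pr\lbr{c(\xscript) = h(\xscript)}$ over the same class. For (2), an $h^*$ minimizes $d_\D(c,h)$ exactly when it maximizes the agreement probability. The biconditional in each case is nothing more than the monotone affine bijection identified above, so no separate forward/reverse argument is needed.

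There is essentially no obstacle here: once the density constraint $\Pr\lbr{h} = \mu$ is used to strip the group-size weighting in the lemma and leave only the agreement term variable, the corollary is a one-line rearrangement. The only point worth flagging, looking ahead to Theorem \ref{thm:main-reduction}, is that minimizing $\Pr\lbr{c(\xscript) = h(\xscript)}$ differs from the standard 0-1 error $\Pr\lbr{c(\xscript) \neq h(\xscript)}$ by an additive constant $1$, so an agnostic learner over $\HS_\mu^\D$ directly solves both optimization problems (for (2) directly, and for (1) by swapping the role of $c$ and $\neg c$, which preserves the class $\HS_\mu^\D$ only up to replacing $\mu$ by $1-\mu$ in the outer sweep over densities of Proposition \ref{prop:approximating-auditing-with-simple-optimization}).
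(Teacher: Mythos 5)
Your proof is correct and follows essentially the same route as the paper: both invoke Lemma \ref{lma:relationship-between-auditing-and-classification-error} and use the fact that $\Pr\lbr{h}=\mu$ is fixed on $\HS_\mu^\D$ to conclude that $d_\D(c,h)$ is a (strictly decreasing) affine function of the agreement probability $\Pr\lbr{c(\xscript)=h(\xscript)}$, from which both equivalences are immediate; your write-up merely makes the affine map explicit. (A tangential nitpick on your closing aside: to handle case (1) one relabels the examples with $-c$, as in Algorithm \ref{alg:fairness-auditing}, which leaves the hypothesis class $\HS_\mu^\D$ unchanged---no replacement of $\mu$ by $1-\mu$ is needed.)
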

    \begin{proof}
        Because $\Pr\lbr{ c}$ is a constant and $\Pr\lbr{h} = \mu,\forall h\in\HS_\mu^\D$ by Definition~\ref{def:halfspaces-with-proabilistic-lock}, $d_\D(c, h)$ is simply an affine transformation of $\Pr\lbr{c(\xscript) = h(\xscript)}$ for a fixed $\mu$ by Lemma \ref{lma:relationship-between-auditing-and-classification-error}, which implies the desired results.
    \end{proof}
    
    Proposition \ref{prop:approximating-auditing-with-simple-optimization} tells us that solving $\Pg_{a,b}^\D(k, n)$ for $k=0,\ldots, n$ would give us a good enough approximation to the maximum unfairness level, of course, with a large enough $n$. Therefore, we just need to further show that solving each $\Pg_{a,b}^\D(k, n)$ is equivalent to learning $\HS_\mu^\D$ to complete the reduction.
    
    Formally, because $\Pg_{a,b}^\D(k, n)$ can be equivalently written as
    \begin{equation}
        \max_{h\in\HS_\mu^\D}\quad \Pr_{\xscript\in\D}\lbr{h(\xscript) = 1}\abs{d_\D(c, h)}
    \end{equation}
    for some $\mu = a + k(b-a)/n$, it suffices to prove the following theorem.
    \begin{lemma}\label{lma:reducing-auditing-to-agnostic-learning}
        Given any binary classifier $c:\R^d\rightarrow\lbr{-1, +1}$, a data distribution $\D$ and a collection of halfspaces $\HS_\mu^\D$ over $\R^d$ such that
        \begin{equation*}
            \opt_{\min}\leq \Pr_{\xscript\sim\D}\lbr{c(\xscript)= h(\xscript)}\leq \opt_{\max}
        \end{equation*}
        for all $h\in\HS_\mu^\D$, if $h_{\vscript<>},h_{\uscript}\in\HS_\mu^\D$ satisfy that $\Pr\lbr{c(\xscript) = h_{\vscript<>}(\xscript)}\leq\opt_{\min} + 2\epsilon$ as well as $\Pr\lbr{c(\xscript) = h_{\uscript}(\xscript)}\geq\opt_{\max} - 2\epsilon$, we have either
        \begin{equation}
            \Pr_{\xscript\sim\D}\lbr{h_{\vscript<>}(\xscript) = 1}\abs{d_\D(c, h_{\vscript<>})} \geq \gamma^* - \epsilon
        \end{equation}
        or
        \begin{equation}
            \Pr_{\xscript\sim\D}\lbr{h_{\uscript}(\xscript) = 1}\abs{d_\D(c, h_{\uscript})}\geq \gamma^* - \epsilon
        \end{equation}
        where $\gamma^* = \max_{h\in\HS_\mu^\D}\Pr_{\xscript\sim\D}\lbr{h(\xscript) = 1}\abs{d_\D(c, h)}$.
    \end{lemma}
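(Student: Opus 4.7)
The plan is to use the affine relationship between $\Pr\lbr{h}\, d_\D(c,h)$ and the classification agreement $\Pr\lbr{c(\xscript)=h(\xscript)}$ exposed by Lemma~\ref{lma:relationship-between-auditing-and-classification-error} and already exploited in Corollary~\ref{cor:optimal-agnostic-learning-implies-optimal-auditing}. Because every $h\in\HS_\mu^\D$ satisfies $\Pr\lbr{h}=\mu$, rearranging Lemma~\ref{lma:relationship-between-auditing-and-classification-error} yields an identity of the form
\begin{equation*}
\Pr\lbr{h}\, d_\D(c,h) \;=\; \tfrac{1}{2}\bigl(K - \Pr\lbr{c(\xscript)=h(\xscript)}\bigr),
\end{equation*}
where $K := \Pr\lbr{\neg c}(1-\mu) + \Pr\lbr{c}\mu$ depends only on $c$ and $\mu$, not on $h$. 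Consequently $\Pr\lbr{h}\,\abs{d_\D(c,h)} = \tfrac{1}{2}\bigl|K - \Pr\lbr{c(\xscript)=h(\xscript)}\bigr|$ is maximized over $\HS_\mu^\D$ by driving $\Pr\lbr{c(\xscript)=h(\xscript)}$ as far from $K$ as possible within $[\opt_{\min},\opt_{\max}]$, so
\begin{equation*}
\gamma^* \;=\; \max\!\Bigl\{\tfrac{K-\opt_{\min}}{2},\; \tfrac{\opt_{\max}-K}{2}\Bigr\}.
\end{equation*}

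The second step is a two-way case split on which of the two quantities above realizes $\gamma^*$. If the first does, substitute $h_{\vscript<>}$ into the affine identity and use the hypothesis $\Pr\lbr{c(\xscript)=h_{\vscript<>}(\xscript)}\leq \opt_{\min}+2\epsilon$; the factor $1/2$ in the identity converts the $2\epsilon$ slack in the hypothesis into $\epsilon$ slack in the conclusion $\Pr\lbr{h_{\vscript<>}}\, d_\D(c,h_{\vscript<>}) \geq \gamma^* - \epsilon$, from which the desired absolute-value bound follows (non-trivially only when $\gamma^*\geq\epsilon$; otherwise the inequality holds vacuously). Symmetrically, if the second term realizes $\gamma^*$, substitute $h_{\uscript}$ and use $\Pr\lbr{c(\xscript)=h_{\uscript}(\xscript)}\geq \opt_{\max}-2\epsilon$ to obtain $\Pr\lbr{h_{\uscript}}\, d_\D(c,h_{\uscript}) \leq -\gamma^* + \epsilon$, which again implies the absolute-value bound after multiplying by $-1$.

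There is no serious obstacle once one has written down the affine identity — the argument then reduces to bookkeeping with the factor of $1/2$. The only conceptual point worth stressing is that the absolute value around $d_\D$ forces the audit to consider \emph{both} a near-agreement-minimizer ($h_{\vscript<>}$) and a near-agreement-maximizer ($h_{\uscript}$), since a priori either endpoint of the agreement range $[\opt_{\min},\opt_{\max}]$ could correspond to the most-harmed subgroup. Returning whichever of the two candidate halfspaces gives the larger value of $\Pr\lbr{h}\abs{d_\D(c,h)}$ handles both signs of the worst-case deviation simultaneously, so invoking the agnostic learner in both ``directions'' suffices to certify $\gamma^*-\epsilon$ unfairness whenever it exists.
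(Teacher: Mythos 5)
Your proof is correct and rests on the same core mechanism as the paper's: Lemma~\ref{lma:relationship-between-auditing-and-classification-error} makes $\Pr\lbr{h}\,d_\D(c,h)$ an affine function of the agreement $\Pr\lbr{c(\xscript)=h(\xscript)}$ once $\Pr\lbr{h}=\mu$ is fixed, so near-minimal or near-maximal agreement forces near-maximal $\Pr\lbr{h}\abs{d_\D(c,h)}$, with the factor $1/2$ converting the $2\epsilon$ learning slack into $\epsilon$ auditing slack. The only difference is presentational: you compare the agreement to the explicit constant $K$ and the endpoints $\opt_{\min},\opt_{\max}$ (and in fact you only need the inequality $\gamma^*\leq\max\{(K-\opt_{\min})/2,\,(\opt_{\max}-K)/2\}$, which holds even if the endpoints are not attained), whereas the paper compares against the optimizer $h^*$ via the reverse triangle inequality together with Corollary~\ref{cor:optimal-agnostic-learning-implies-optimal-auditing}; the two case splits are equivalent.
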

    \begin{proof}
        By the proof of Lemma~\ref{lma:relationship-between-auditing-and-classification-error}, we have
        \begin{equation*}
            2\Pr\lbr{ h}\abs{d_\D(c, h)}=|\underbrace{\Pr\lbr{\neg c}\Pr\lbr{\neg h} - \Pr\lbr{\neg c\cap \neg h}}_{I_1}  + \underbrace{\Pr\lbr{ c}\Pr\lbr{ h} - \Pr\lbr{ c\cap  h}}_{I_2}|
        \end{equation*}
        Let $h^*\in\HS_\mu^\D$ be such that $\Pr\lbr{ h^*}\abs{d_\D(c, h^*)}=\gamma^*$. Then for $I_2$, we have
        \begin{align*}
            I_2 =& (\Pr\lbr{ c} - \Pr\lbr{ c\cond  h^*} + \Pr\lbr{ c\cond  h^*})\Pr\lbr{ h} - \Pr\lbr{ c\cap  h}\\
            =&\Pr\lbr{ h^*} d_\D(c, h^*) + \Pr\lbr{ c\cap  h^*}- \Pr\lbr{ c\cap  h}
        \end{align*}
        where the last equation is because $h^*\in \HS_\mu^\D$, then $\Pr\lbr{ h} = \Pr\lbr{ h^*} = \mu$ by Definition~\ref{def:halfspaces-with-proabilistic-lock}.
    
        Similarly, for $I_1$, we can write
        \begin{align*}
            I_1=&\Pr\lbr{\neg h^*}(\Pr\lbr{\neg c} - \Pr\lbr{\neg c\cond \neg h^*}) + \Pr\lbr{\neg c\cap \neg h^*}- \Pr\lbr{\neg c\cap \neg h}\\
            =& \Pr\lbr{ h^*} d_\D(c, h^*)+ \Pr\lbr{\neg c\cap \neg h^*}- \Pr\lbr{\neg c\cap \neg h}
        \end{align*}
        where the last equation follows because we have shown in the proof of Lemma~\ref{lma:relationship-between-auditing-and-classification-error} that $d_\D(c, h^*) = \Pr\lbr{\neg h^*}(\Pr\lbr{\neg c} - \Pr\lbr{\neg c\cond \neg h^*})/\Pr\lbr{ h^*} $.
    
        Combining $I_1$ and $I_2$ will result to
        \begin{align*}
            \Pr\lbr{ h}\abs{d_\D(c, h)}=&|\Pr\lbr{ h^*} d_\D(c, h^*) + \frac{\Pr\lbr{c(\xscript) = h^*(\xscript)}- \Pr\lbr{c(\xscript) = h(\xscript)}}{2}|\\
            \geq& \gamma^* - \frac{\abs{\Pr\lbr{c(\xscript) = h^*(\xscript)}- \Pr\lbr{c(\xscript) = h(\xscript)}}}{2}
        \end{align*}
        by triangle inequality. Further, since $h^*$ maximizes $\abs{d_\D(c, h)}$, it either maximizes or minimizes $d_\D(c, h)$. Then, by Corollary \ref{cor:optimal-agnostic-learning-implies-optimal-auditing}, we know 
        $$\Pr_{\xscript\sim\D}\lbr{c(\xscript) = h^*(\xscript)}\in \lbr{\opt_{\min}, \opt_{\max}}$$
        which implies either 
        \begin{equation*}
            \abs{\Pr\lbr{c(\xscript) = h^*(\xscript)}- \Pr\lbr{c(\xscript) = h_{\vscript<>}(\xscript)}} \leq 2\epsilon
        \end{equation*}
        or
        \begin{equation*}
            \abs{\Pr\lbr{c(\xscript) = h^*(\xscript)}- \Pr\lbr{c(\xscript) = h_{\uscript}(\xscript)}} \leq 2\epsilon
        \end{equation*}
        Therefore, the proof is completed.
    \end{proof}

    %One might notice that a key reason for the proof of lemma \ref{lma:reducing-auditing-to-agnostic-learning} to work is that optimizing the classification error for $\HS_\mu^\D$ is equivalent to maximizing the unfairness level of the auditing problem. We prove the equivalence in the following corollary.

    \begin{remark}
        We emphasize that it is necessary for us to consider the guarantee of agnostic learning in a additive form rather than multiplicative form. Although Corollary~\ref{cor:optimal-agnostic-learning-implies-optimal-auditing} shows that the classification error, $\Pr\lbr{c(\xscript)\neq h(\xscript)}$, and the unfairness level, $\Pr\lbr{h}\abs{d_\D(c, h)}$, are dual to each other over $\HS_\mu^\D$, the affine relationship between them prohibits obtaining a guarantee on the unfairness from a multiplicative error. This also explains why the guarantee provided by \cite{diakonikolas2022learning} does not fit in our analysis.
    \end{remark}
    
    Now we are ready to prove Theorem \ref{thm:main-reduction}.
    
    \begin{proof}[Proof of Theorem \ref{thm:main-reduction}]
        To solve the auditing problem, we just need to solve the sequence of optimization problems, $\lbr{\Pg_{a,b}^\D(k, n)\cond k = 0, \ldots, n}$ as described in Proposition \ref{prop:approximating-auditing-with-simple-optimization}. We can solve each $\Pg_{a,b}^\D(k, n)$ with an additive error $\epsilon$ by calling the given oracle of learning halfspaces with the same strategy specified in Lemma~\ref{lma:reducing-auditing-to-agnostic-learning}. Eventually, we solve all of these optimization problems with an $2(b-a)/n + \epsilon$ additive error and a running time of $O(n)$ factor overhead compared with that of the oracle.
    \end{proof}
    
    \section{The Hardness of Auditing Under Gaussian Data}
    We show that the problem of auditing halfspaces subgroups under Gaussian distribution is computationally hard in two forms: the multiplicative form and additive form. The hardness results of both cases are obtained through reduction from auditing halfspace subgroups to the problem of continuous Learning With Error (cLWE) under Gaussian distribution, which is known to be as hard as LWE.
    \begin{proposition}[\cite{gupte2022continuous,diakonikolas2023near} Hardness of cLWE]\label{prop:hardness-of-continuous-LWE-under-gaussian-distribution}
        Given Assumption \ref{asp:sub-exponential-assumption-of-lwe}, for any $d\in\N$, any constants $\kappa\in\N, \alpha\in(0, 1), \beta\in\R_+$ and any $\log^\beta d\leq k\leq Cd$ where $C>0$ is a sufficiently small universal constant, the problem LWE$(d^{O(k^\alpha)}, \gaussian, \Sphere^{d - 1}, \gaussian_\sigma, \mod_T)$ over $\R^d$ with $\sigma\geq k^{-\kappa}$ and $T = 1/C'\sqrt{k\log d}$, where $C'>0$ is a sufficiently large universal constant, cannot be solved in time $d^{O(k^\alpha)}$ with $d^{-O(k^\alpha)}$ advantage
    \end{proposition}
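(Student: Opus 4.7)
The statement is cited from \cite{gupte2022continuous,diakonikolas2023near}, so my plan is to sketch the reduction those works use, turning the discrete LWE instance promised by Assumption~\ref{asp:sub-exponential-assumption-of-lwe} into a cLWE instance with Gaussian samples, spherical secret, and the stated parameters. The reduction proceeds in two main stages: first, lifting the discrete sample distribution to a continuous Gaussian via Gaussian smoothing; second, normalizing the integer secret to live on $\Sphere^{d-1}$ and rescaling the modulus accordingly.

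For the first stage, I would take an LWE sample $(\xscript, y)$ with $\xscript\in\Z_q^d$, $\sscript\in\Z_q^d$, and $y=\mod_q(\sscript(\transpose)\xscript+z)$ for $z\sim\gaussian_\sigma$, and replace $\xscript$ by $\xscript' = \xscript + \mathbf{g}$ for a fresh Gaussian $\mathbf{g}$ with width $\tau$ chosen large enough to smooth out the integer lattice. For the right $\tau$, the mixture $\xscript'$ is statistically close to a continuous Gaussian of the target covariance. The LWE equation rewrites as $y = \mod_q(\sscript(\transpose)\xscript' + z')$ where $z' = z - \sscript(\transpose)\mathbf{g}$ is still Gaussian, with variance $\sigma^2 + \tau^2\norm{\sscript}_2^2$. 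This converts the discrete instance to a continuous one while preserving the advantage of any distinguisher up to the smoothing error.

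For the second stage, I would normalize the secret. Dividing through by $\norm{\sscript}_2$ sends $\sscript$ to the unit vector $\tilde{\sscript}\in\Sphere^{d-1}$ and the modulus $q$ to $T = q/\norm{\sscript}_2$. Because $\sscript$ is uniform on $\Z_q^d$ we have $\norm{\sscript}_2 = \Theta(q\sqrt{d})$ with high probability, giving $T$ of order $1/\sqrt{d}$ in the baseline regime; to obtain the claimed range of $k$ and the matching $T = 1/(C'\sqrt{k\log d})$ one further restricts attention to an effective $k$-dimensional subproblem (e.g. by embedding the secret into a random $k$-dimensional subspace). By rotational invariance (Fact~\ref{fac:gaussian-rotational-invariance-property}), applying a random orthogonal map keeps the samples standard Gaussian while uniformizing $\tilde{\sscript}$ on $\Sphere^{k-1}$. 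Tracking sample count, time, and advantage through both stages and invoking Assumption~\ref{asp:sub-exponential-assumption-of-lwe} then yields the claimed $d^{O(k^\alpha)}$ lower bounds.

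The main obstacle is the joint tuning of the smoothing width $\tau$, the rescaling factor, and the effective dimension $k$. These three parameters have to be set so that simultaneously (i) $\xscript'$ is statistically close to $\gaussian(0,\identity)$, (ii) the effective noise width after normalization lies in the stated range $\sigma\geq k^{-\kappa}$, and (iii) the effective modulus equals $T = 1/(C'\sqrt{k\log d})$. This parameter balancing is the technical core of \cite{gupte2022continuous,diakonikolas2023near}, and the remaining steps (statistical-distance bookkeeping and advantage preservation) are essentially mechanical once $\tau$ and $k$ are fixed.
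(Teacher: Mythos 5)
The paper does not prove this proposition at all: it is imported as a black-box hardness result, stated with citations to Gupte et al.~and Diakonikolas et al., and every later use of it treats it as an external assumption-plus-reduction already established in those works. So there is no internal proof to compare against; what you have written is a reconstruction of the cited works' reduction, and judged on its own terms it has two genuine gaps. First, the smoothing step as you state it is not sound: if you set $\xscript' = \xscript + \mathbf{g}$ and rewrite $y = \mod_q(\sscript^{\transpose}\xscript' + z')$ with $z' = z - \sscript^{\transpose}\mathbf{g}$, the new noise $z'$ is correlated with the new sample $\xscript'$ (both contain $\mathbf{g}$), so you have not produced a cLWE instance, whose definition requires the noise to be independent of the sample. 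The actual reductions handle this with a careful Gaussian randomized-rounding/conditional-sampling argument (smoothing-parameter bounds for the lattice $q\Z^d$), and claiming the variance is simply $\sigma^2 + \tau^2\norm{\sscript}_2^2$ glosses over exactly the place where the work is.

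Second, the parameter bookkeeping that yields the specific statement --- secret uniform on $\Sphere^{d-1}$, modulus $T = 1/(C'\sqrt{k\log d})$, noise $\sigma \geq k^{-\kappa}$, and hardness at the level $d^{O(k^\alpha)}$ time and $d^{-O(k^\alpha)}$ advantage from Assumption~\ref{asp:sub-exponential-assumption-of-lwe} --- is the substance of the cited results, and your sketch inverts the direction of the dimension argument: one starts from LWE in a \emph{low} dimension (of order $k$, chosen so that $d^{O(k^\alpha)} = 2^{O(k^\alpha \log d)}$ samples are affordable under the $2^{O(n^\alpha)}$-hardness assumption), converts it to cLWE there, and then embeds into $\R^d$ by padding with independent Gaussian coordinates and applying a random rotation, rather than ``restricting attention to an effective $k$-dimensional subproblem'' of a $d$-dimensional instance. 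Your normalization estimate $\norm{\sscript}_2 = \Theta(q\sqrt{d})$ also does not by itself produce the $\sqrt{\log d}$ factor in $T$, and you explicitly defer this balancing as ``the technical core.'' That is an accurate self-assessment: since the proposition's content is precisely that these parameters can be achieved simultaneously, deferring it means the proposal is an outline of the cited proof strategy rather than a proof. For the purposes of this paper that is acceptable only because the paper itself cites rather than proves the proposition.
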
 
    
    We first show it is computationally hard to distinguish between halfspace subgroups that are evenly fair and halfspace subgroups among which there exists a slightly unfair subgroup.

    \begin{theorem}\label{thm:hardness-of-auditing-halfspace-subgroups}
        Under Assumption \ref{asp:sub-exponential-assumption-of-lwe}, for any $d\in\N$, any constants $\alpha\in(0, 1), \beta\in\R_+$, and any $\log^{\beta} d\leq k \leq cd$ where $c$ is a sufficiently small constant, there is no algorithm that runs in time $d^{O(k^\alpha)}$ and distinguishes between the following two cases of a joint distribution $\D$ of $(\xscript, c(\xscript))$ supported on $\R^d\times\lbr{-1, +1}$ with marginal $\D_{\xscript} = \gaussian(0, \identity)$, with $d^{-O(k^\alpha)}$ advantage:
        \begin{itemize}
            \item[(i)] \textbf{Alternative Hypothesis}: There exist non-negligibly unfair halfspace subgroups, specifically $\exists h\in\HS^d, \Pr_{\D}\lbr{h(\xscript) = 1}|d_\D(c, h)| = \Omega(1/\sqrt{k\log d})$.
            \item[(ii)] \textbf{Null Hypothesis}: All halfspace subgroups are perfectly fair, i.e., $\Pr_{\D}\lbr{h(\xscript) = 1}|d_\D(c, h)| = 0, \forall h\in \HS^d$.
        \end{itemize}
    \end{theorem}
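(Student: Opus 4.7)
The plan is a reduction from continuous LWE. Given a cLWE instance with sphere secret $\sscript\in\Sphere^{d-1}$, Gaussian covariates $\xscript\sim\gaussian(0,\identity)$, noise $z\sim\gaussian_\sigma$, and modulus $T = 1/(C'\sqrt{k\log d})$ as in Proposition~\ref{prop:hardness-of-continuous-LWE-under-gaussian-distribution}, I would turn each cLWE sample $(\xscript,y)\in\R^d\times\R_T$ into a labeled sample $(\xscript,b)\in\R^d\times\lbr{-1,+1}$ by applying the balanced square-wave
$$b = f(y) = \begin{cases} +1 & \text{if } y\in[0,T/2),\\ -1 & \text{if } y\in[T/2,T).\end{cases}$$
The joint distribution so produced has Gaussian marginal on $\xscript$, matching what an auditor would observe for an unknown (possibly randomized) classifier $c$. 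I would take $\sigma = k^{-\kappa}$ for a constant $\kappa=\kappa(\beta)$ large enough that $\sigma/T = O(1)$; the assumption $k\geq\log^\beta d$ allows such a choice. Any $d^{O(k^\alpha)}$-time distinguisher between cases (i) and (ii) for this joint distribution then solves the cLWE instance with the same $d^{-O(k^\alpha)}$ advantage, contradicting Proposition~\ref{prop:hardness-of-continuous-LWE-under-gaussian-distribution}.

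For the two case analyses: under the null cLWE hypothesis, $y$ is uniform on $\R_T$ independent of $\xscript$, so $b$ is an independent fair coin. Then for every halfspace $h\in\HS^d$ we have $\Pr\lbr{b=1 \cond \xscript\in h} = \Pr\lbr{b=1} = 1/2$, hence $d_\D(c,h) = 0$, which is exactly case (ii). Under the alternative hypothesis, $y = \mod_T(\sscript^\top\xscript + z)$; letting $t = \sscript^\top\xscript \sim\gaussian(0,1)$ by Fact~\ref{fac:gaussian-rotational-invariance-property} and $g(t) = \Pr\lbr{b=1\cond \sscript^\top\xscript=t}$, I would test the halfspace $h_\theta = \lbr{\xscript \cond \sscript^\top\xscript\geq\theta}$ aligned with the secret. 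Writing $\phi$ for the standard Gaussian density, the unfairness equals
$$\Pr\lbr{h_\theta}\abs{d_\D(c,h_\theta)} = \abs{\int_\theta^\infty \phi(t)\,(g(t)-1/2)\,dt}.$$
The centered response $g-1/2$ is the periodic square wave $f(\mod_T(\cdot))/2$ convolved with the Gaussian kernel $\gaussian_\sigma$; its dominant Fourier mode is $\sin(2\pi t/T)$ with amplitude $\Theta(e^{-2\pi^2\sigma^2/T^2}) = \Theta(1)$. A single integration by parts of this dominant mode against $\phi$ from $\theta$ to $\infty$ produces a boundary term of order $T\phi(\theta)\cos(2\pi\theta/T)$, with remainder strictly smaller in $T$. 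Choosing $\theta = O(1)$ with its phase modulo $T$ arranged so that $\cos(2\pi\theta/T) = \Omega(1)$ then yields $\Pr\lbr{h_\theta}\abs{d_\D(c,h_\theta)} = \Omega(T) = \Omega(1/\sqrt{k\log d})$, certifying case (i).

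The main obstacle will be the careful accounting for the truncated oscillatory integral: I must verify that the boundary contribution of the dominant mode dominates, that the higher harmonics of $g$ each contribute only $O(T/n^2)$ after one or two integrations by parts (so their sum does not wipe out the $n=\pm 1$ part), and that the Gaussian smoothing damping $e^{-2\pi^2\sigma^2/T^2}$ remains $\Omega(1)$ under our choice of $\sigma$ given the allowed regime of $k$. This Fourier-on-the-Gaussian-measure bookkeeping is essentially the analytic core borrowed from the cLWE-based hardness argument of Diakonikolas et al.~\cite{diakonikolas2023near}; once those estimates are discharged, the advantage-preserving nature of the reduction and the contrapositive of Proposition~\ref{prop:hardness-of-continuous-LWE-under-gaussian-distribution} close the argument.
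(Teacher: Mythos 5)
Your reduction coincides with the paper's in every structural respect: the same square-wave label map (threshold $T/2$ on $y$), the same choice of $\sigma=k^{-\kappa(\beta)}$ made admissible by $k\geq\log^\beta d$, the identical null-case argument (independence of $y$ and $\xscript$ forces $d_\D(c,h)=0$ for all $h$), and the same advantage-preserving appeal to Proposition \ref{prop:hardness-of-continuous-LWE-under-gaussian-distribution}. Where you genuinely diverge is the witness in the alternative case. The paper sidesteps all Fourier and oscillatory-integral bookkeeping by testing a \emph{pair} of oppositely oriented halfspaces $h_1(\xscript)=\sgn(\sscript^\transpose\xscript-T/6)$ and $h_2(\xscript)=\sgn(-\sscript^\transpose\xscript+T/3)$: adding their two signed unfairness values via Lemma \ref{lma:relationship-between-auditing-and-classification-error} cancels everything outside the band $B=\lbr{\sscript^\transpose\xscript\in[T/6,T/3]}$, leaving $2\Pr\lbr{B}(\Pr\lbr{c=1}-\Pr\lbr{c=1\mid B})$; since $\Pr\lbr{c=1}=1/2$ exactly (the symmetry you also need for your centering at $1/2$), $\Pr\lbr{B}=\Omega(T)$, and on $B$ the label is $+1$ unless $\abs{z}\geq T/6$, which has probability $o(1)$ because $\sigma=o(T)$, one of $h_1,h_2$ is $\Omega(T)$-unfair using nothing beyond a Gaussian tail bound. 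Your single-threshold route can be made to work, but the ``bookkeeping'' you defer is exactly where the content lies, so the two proofs trade a clever choice of witness for genuine harmonic analysis.

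If you complete your version, three points need tightening. First, one integration by parts does not leave a remainder ``strictly smaller in $T$'': $\frac{1}{\omega}\int_\theta^\infty\phi'(t)\cos(\omega t)\,dt$ is a priori of the same order $1/\omega$ as the boundary term, so you need a second integration by parts (or the exact evaluation of $\int_\theta^\infty\phi(t)\sin(\omega t)\,dt$) for the dominant mode, not only for the harmonics. Second, the higher odd harmonics contribute boundary terms of total size at most $(\pi^2/8-1)\approx 0.23$ times the $n=1$ term, so you must invoke this numeric margin together with your phase condition, or more simply take $\theta=0$: then $\int_0^\infty\phi(t)\sin(2\pi nt/T)\,dt>0$ for every $n$ (a Dawson-function identity), all harmonics add with one sign, and the witness is a homogeneous halfspace. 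Third, $\sigma/T=O(1)$ is not quite enough as stated; take $\sigma/T$ a sufficiently small constant (the paper arranges $\sigma=o(T)$ via $\kappa=\ceil{1/2\beta+3/2}$) so the damping factor does not eat the margin. None of these is fatal, but none is discharged in your sketch, whereas the paper's band trick needs no such estimates.
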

    \begin{proof}
        We give an efficient method taking as input samples from a distribution $\D'$, that is either from the alternative hypothesis or the null hypothesis of LWE$(d^{O(k^\alpha)},\gaussian(0, \identity), \Sphere^{d-1}, \gaussian(0, \sigma),\mod_T)$ from Proposition \ref{prop:hardness-of-continuous-LWE-under-gaussian-distribution}, and generate samples from another distribution $\D$ with the following properties: if $\D'$ is from the alternative (resp. null) hypothesis of the LWE problem, then the resulting distribution $\D$ will satisfy the alternative (resp. null) hypothesis requirement of the theorem for the halfspace auditing problem.

        The reduction process can be formulated as follow: for a sample $(\xscript, y)$ from a instance $\D'$ of the problem LWE$(d^{O(k^\alpha)},\gaussian(0, \identity), \Sphere^{d-1}, \gaussian(0, \sigma),\mod_T)$ from Proposition \ref{prop:hardness-of-continuous-LWE-under-gaussian-distribution}, we simply output $(\xscript, c(\xscript))\sim\D$, where
        \begin{equation*}
            c(\xscript)=
            \begin{cases}
                +1,\quad \text{if }y\leq T/2\\
                -1,\quad \text{otherwise}
            \end{cases}
        \end{equation*}
        We argue that $\D$ satisfies the desired requirement stated above.

        For the alternative hypothesis case, let $\D'$ be from the alternative hypothesis case of the LWE. Let $\sscript$ be the secret vector in the LWE problem. We consider the following two halfspaces: 
        \begin{align*}
            h_1(\xscript) =& \sgn(\sscript(\transpose)\xscript - T/6)\\
            h_2(\xscript) =& \sgn(-\sscript(\transpose)\xscript + T/3)
        \end{align*}
        If we can show $\abs{\prob<\xscript\sim\D_{\xscript}>{h_1(\xscript) = 1}d_\D(c, h_1) + \prob<\xscript\sim\D_{\xscript}>{h_2(\xscript) = 1}d_\D(c, h_2)} = \Omega(T)$, then either $h = h_1$ or $h = h_2$ satisfies $\prob<\xscript\sim\D_{\xscript}>{h(\xscript) = 1}\abs{d_\D(c, h)} = \Omega(T)$, which implies the desired property of the alternative hypothesis we would like to prove. By Lemma \ref{lma:relationship-between-auditing-and-classification-error}, we have
        \begin{align*}
            &2\prob<\xscript\sim\D_{\xscript}>{h_1(\xscript) = 1}d_\D(c, h_1) + 2\prob<\xscript\sim\D_{\xscript}>{h_2(\xscript) = 1}d_\D(c, h_2)\\
            =& \underbrace{\prob{\neg c}(\prob{ \neg h_1} + \prob{ \neg h_2}) + \prob{ c}(\prob{ h_1} + \prob{ h_2})}_{I_1} \\
            &- (\underbrace{\prob{c(\xscript) = h_1(\xscript)} + \prob{c(\xscript) =h_2(\xscript)}}_{I_2})
        \end{align*}

        To bound $I_1, I_2$, we first examine the subset of domain where $h_1$ and $h_2$ agree, namely
        \begin{align*}
            B:=& \lbr{\xscript\in\R^d\cond h_1(\xscript) = h_2(\xscript)}\\
            =& \lbr{\xscript\in\R^d\cond h_1(\xscript)=1 \isect h_2(\xscript)=1}\\
            =&\lbr{\xscript\in\R^d\cond \sscript(\transpose)\xscript\in [T/6, T/3]}
        \end{align*}

        Then, for $I_1$, by the law of total probability, we have
        \begin{align*}
            I_1 =& \prob{c(\xscript) = -1}(\prob{ h_1(\xscript) = -1} + \prob{ h_2(\xscript) = -1} + \prob{\xscript\in B} - \prob{\xscript\in B})\\
            &+ \prob{ c(\xscript)=1}(\prob{ h_1(\xscript)=1} + \prob{ h_2(\xscript)=1\isect \xscript\notin B} + \prob{ h_2(\xscript)=1\isect \xscript\in B})\\
            \ceq[(i)]&\prob{c(\xscript) = -1}(1 - \prob{\xscript\in B}) + \prob{c(\xscript)=1}(1 + \prob{\xscript\in B})\\
            =& 1 + \prob{\xscript\in B}(\prob{c(\xscript)=1} - \prob{c(\xscript) = -1})\\
            =& 1 + \prob{\xscript\in B}(2\prob{c(\xscript)=1} - 1)
        \end{align*}
        where (i) is because $\lbr{\xscript\in\R^d\cond h_1(\xscript) = -1}, \lbr{\xscript\in\R^d\cond h_2(\xscript) = -1}, \lbr{\xscript\in B}$ are pairwise disjoint and their union equals to $\R^d$, $\lbr{\xscript\in\R^d\cond h_1(\xscript)=1}, \lbr{\xscript\in\R^d\cond h_2(\xscript)=1\isect \xscript\notin B}$ are disjoint and their union equals to $\R^d$; and since $\lbr{\xscript\in B}\subset \lbr{\xscript\in\R^d\cond h_2(\xscript) = 1}$ by definition, $\lbr{\xscript\in B}= \lbr{\xscript\in B\cond h_2(\xscript) = 1}$. 

        For $I_2$, because for any $\xscript\in B$, $h_1(\xscript) = h_2(\xscript) = 1$ by construction, and by the law of total probability, we have 
        \begin{align*}
            I_2=& \prob{c(\xscript) = h_1(\xscript)\isect \xscript\notin B} + \prob{c(\xscript) = h_2(\xscript)\isect \xscript\notin B} + 2\prob{c(\xscript) = 1\isect \xscript\in B}\\
            =& \prob{\xscript\notin B} + 2\prob{c(\xscript) = 1\isect \xscript\in B}\\
            =& 1 + \prob{c(\xscript) = 1\isect \xscript\in B} - \prob{c(\xscript) = -1\isect \xscript\in B}\\
            =& 1- \prob{\xscript\in B}(1 - 2\prob{c(\xscript) = 1\cond \xscript\in B}
        \end{align*}

        By the definition of $c$ as well as the Alternative case distribution of the LWE problem, $\lbr{\xscript\in \R^d\cond c(\xscript) = 1}$ is equivalent to $\lbr{\xscript\in\R^d\cond \mod_T(\sscript(\transpose)\xscript + z)\leq T/2}$ for some $z\sim\gaussian(0, \sigma^2)$. Further, we have
        \begin{equation*}
            \lbr{\xscript\in\R^d\cond \mod_T(\sscript(\transpose)\xscript + z)\leq T/2}\equiv \union_{k\in \Z}\lbr{\sscript(\transpose)\xscript + z\in(kT, kT + T/2]}
        \end{equation*}
        Notice that $\sscript(\transpose)\xscript + z$ is a one dimensional Gaussian random variable, which, by symmetry of Gaussian distribution, implies $\prob{c(\xscript) = 1} = \prob{\union_{k\in \Z}\lbr{\sscript(\transpose)\xscript + z\in(kT, kT + T/2]}} = 1/2$. Therefore, combining $I_1$ and $I_2$ gives
        \begin{align}
            I_1 - I_2 =& 2\prob{\xscript\in B}(\prob{c(\xscript)=1} - \prob{c(\xscript) = 1\cond\xscript\in B})\notag\\
            =&\Omega(T)(1/2- \prob{c(\xscript) = 1\cond\xscript\in B})\label{eq:difference-between-I1-and-I2}
        \end{align}
        where the last equation is because $\sscript(\transpose)\xscript\sim\gaussian(0, 1)$, hence, $\prob{\xscript\in B} = \prob{\sscript(\transpose)\xscript\in [T/6, T/3]} = \Omega(T)$. Since we were only concerned with showing $|I_1-I_2|$ is large, it suffices to show $\prob{c(\xscript) = 1\cond\xscript\in B} - 1/2 = \Omega(1)$. 
        
        %Again, by the definition of the Alternative case distribution from the LWE problem, we know that $c(\xscript) = -1$ only if $\mod_T(\sscript(\transpose)\xscript + z)> T/2$, which in turn happens only if $\sscript(\transpose)\xscript + z > T/2$ or $\sscript(\transpose)\xscript + z<0$. 
        For $\xscript\in B$, we have $\sscript(\transpose)\xscript\in[T/6, T/3]$, therefore $c(\xscript) = -1$ only if $\abs{z}\geq T/6$. Notice that $z\sim \gaussian(0, \sigma^2)$ and Proposition \ref{prop:hardness-of-continuous-LWE-under-gaussian-distribution} states that the LWE problem is hard for any fixed constant $\kappa\in \N$ and $\sigma\geq k^{-\kappa}$. Given the constant $\beta\in \R_+$ in this theorem, we can take $\kappa = \ceil{1/2\beta + 1/2 + 1}$, which is a fixed constant. Then, by Proposition \ref{prop:hardness-of-continuous-LWE-under-gaussian-distribution}, the LWE problem is hard for $\sigma = k^{-\kappa}\leq 1/(k^{3/2}\sqrt{\log d}) = o(T)$. Therefore, by a Gaussian tail bound, we have
        \begin{equation*}
            \prob<\xscript\sim\D_{\xscript}>{c(\xscript) = -1\cond \xscript\in B}\leq \prob<z\sim\gaussian(0, \sigma^2)>{\abs{z}\geq T/6} = o(1)
        \end{equation*}
        Plugging the above back into equation \eqref{eq:difference-between-I1-and-I2}, we can conclude that
        \begin{equation*}
            \prob<\xscript\sim\D_{\xscript}>{h_1(\xscript) = 1}d_\D(c, h_1) + \prob<\xscript\sim\D_{\xscript}>{h_2(\xscript) = 1}d_\D(c, h_2) = \Omega(T)
        \end{equation*}
        Thus, either $h = h_1$ or $h = h_2$ must satisfiy $\prob<\xscript\sim\D_{\xscript}>{h(\xscript) = 1}\abs{d_\D(c, h)} = \Omega(T)$, which completes the proof for the alternative hypothesis case.
        
        For the null hypothesis, we can immediately see that $\Pr_{\xscript\in \gaussian}\lbr{h}d_\gaussian(c,h) = 0, \forall h\in \HS^d$ because $c(\xscript)$ is independent from each $h\in\HS^d$. 

        It remains to verify the time lower bound and the distinguishing advantage for auditing halfspace subgroups. From Proposition \ref{prop:hardness-of-continuous-LWE-under-gaussian-distribution}, we know that under Assumption \ref{asp:sub-exponential-assumption-of-lwe}, for the problem LWE$(d^{O(k^\alpha)}, \gaussian(0, \identity), \Sphere^{d - 1}, \gaussian(0, \sigma^2), \mod_T)$ with any $\sigma\geq k^{-\kappa}$ (where $\kappa\in \N$ is a constant) and $T = 1/c'\sqrt{k\log d}$, where $c' > 0$ is a sufficiently large universal constant, the problem cannot be solved in $d^{O(k^\alpha)}$ time with $d^{-O(k^\alpha)}$ advantage. Therefore, under the same assumption, there is no algorithm that can solve the decision version of auditing problem w.r.t. halfspace subgroups in $d^{O(k^\alpha)}$ time with $d^{-O(k^\alpha)}$ advantage.
    \end{proof}
    
    Suppose an auditing algorithm is guaranteed to return us a $\gamma'$-unfair certificate (a halfspace) give a $\gamma$-unfair classifier $c$, where $\gamma'\leq\gamma\leq 1$, the following corollaries show that $\gamma'$ can never be close to $\gamma$.
    
    \begin{corollary}[multiplicative form]\label{cor:hardness-of-auditing-halfspace-under-gaussian-in-multiplicative-form}
        Given Assumption \ref{asp:sub-exponential-assumption-of-lwe}, there is no polynomial-time $1/\mathrm{poly}(d)$-approximation algorithm for constructive auditing for halfspace subgroups under Gaussian marginals in $\R^d$.
    \end{corollary}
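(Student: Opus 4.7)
The plan is to instantiate the hardness dichotomy of Theorem~\ref{thm:hardness-of-auditing-halfspace-subgroups} and show that a polynomial-time $(1/\mathrm{poly}(d))$-approximate constructive auditor would give a polynomial-time distinguisher between its two hypotheses, contradicting Assumption~\ref{asp:sub-exponential-assumption-of-lwe}. Concretely, I would suppose for contradiction that there exists an algorithm $\algo$ which, given sample access to $(\xscript,c(\xscript))$ with $\xscript\sim\gaussian(0,\identity)$ and $c$ that is $\gamma$-unfair, outputs with probability $\geq 2/3$ a halfspace certificate $h$ that is $\gamma'$-unfair for some $\gamma' \geq \gamma/\mathrm{poly}(d)$, and otherwise outputs ``fair''; and which runs in time $\mathrm{poly}(d, 1/\gamma')$.

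To drive the reduction, I would pick $k = \log^\beta d$ for a constant $\beta$ with $\alpha\beta > 1$, so that $d^{O(k^\alpha)}$ is super-polynomial in $d$ and the conclusion of Theorem~\ref{thm:hardness-of-auditing-halfspace-subgroups} rules out any polynomial-time distinguisher with $d^{-O(k^\alpha)}$ advantage (in particular, with constant advantage). For this choice of $k$, the alternative-hypothesis unfairness bound from Theorem~\ref{thm:hardness-of-auditing-halfspace-subgroups} is $\gamma = \Omega(1/\sqrt{k\log d}) = 1/\mathrm{poly}(\log d)$, so the approximation guarantee gives a promised $\gamma' \geq \gamma/\mathrm{poly}(d) = 1/\mathrm{poly}(d)$, and hence $\algo$ runs in $\mathrm{poly}(d)$ time.

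The distinguisher is then immediate: run $\algo$ on the samples, answer ``alternative'' if $\algo$ returns a halfspace certificate, and ``null'' if $\algo$ returns ``fair''. In the alternative case, the classifier is $\gamma$-unfair with $\gamma = \Omega(1/\sqrt{k\log d})$, so by the approximation guarantee $\algo$ must (with high probability) produce a nontrivial certificate of strictly positive unfairness. In the null case, every halfspace $h\in\HS^d$ satisfies $\Pr\lbr{h}\abs{d_\D(c,h)}=0$, so no $\gamma'$-unfair certificate exists for any $\gamma'>0$, forcing $\algo$ to output ``fair''. The two behaviors give a constant advantage distinguisher running in $\mathrm{poly}(d)$ time, contradicting Theorem~\ref{thm:hardness-of-auditing-halfspace-subgroups}.

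The only delicate step I anticipate is reconciling the running-time parametrization of Definition~\ref{def:auditing-algorithm-for-sp-subgroup-fairness}, which is written as $\mathrm{poly}(1/\gamma',\log(1/\delta))$, with the ``polynomial-time in $d$'' statement of the corollary: one must verify that, in the regime $k=\log^{\beta}d$, the induced value $1/\gamma' = \mathrm{poly}(d)\cdot\sqrt{k\log d}$ is itself polynomial in $d$, so that the overall runtime remains $\mathrm{poly}(d)$ and the contradiction with the super-polynomial lower bound $d^{O(k^{\alpha})}$ from Theorem~\ref{thm:hardness-of-auditing-halfspace-subgroups} is genuine. Once this bookkeeping is in place, the rest of the argument is essentially a one-line consequence of the gap between $\Omega(1/\sqrt{k\log d})$ and $0$ in the two hypotheses.
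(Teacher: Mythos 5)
Your proposal is correct and follows essentially the same route as the paper: instantiate Theorem~\ref{thm:hardness-of-auditing-halfspace-subgroups}, note that in the alternative case a $1/\mathrm{poly}(d)$-approximate constructive auditor runs in $\mathrm{poly}(d)$ time (since $1/\gamma' = \mathrm{poly}(d)\sqrt{k\log d}$) and must behave differently than in the perfectly fair null case, yielding a constant-advantage polynomial-time distinguisher that contradicts the $d^{O(k^\alpha)}$-time, $d^{-O(k^\alpha)}$-advantage lower bound. The only (minor) divergence is that the paper does not lean on the soundness clause of Definition~\ref{def:auditing-algorithm-for-sp-subgroup-fairness} (``output fair when fair''): it instead Hoeffding-estimates the unfairness of whatever certificate is returned and thresholds, which is why its conclusion is stated quantitatively as $\delta < d^{-O(k^\alpha)}\sqrt{k\log d}$; your shortcut is legitimate under the definition as written, just slightly less robust to auditors with weaker soundness guarantees.
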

    \begin{proof}
        Suppose there exists an auditing algorithm that guarantees to return a $\delta\gamma$-unfair certificate given a $\gamma$-unfair collection of halfspace subgroup and access to data with Gaussian marginal, where $\delta\in (0, 1)$. 
        
        For the alternative hypothesis case as described in Theorem \ref{thm:hardness-of-auditing-halfspace-subgroups}, given a $1/\sqrt{k\log d}$-unfair collection of halfspace subgroups, we run such an algorithm to obtain a $\delta/\sqrt{k\log d}$-unfair certificate, i.e., a halfspace $h$ such that $\prob<\xscript\sim\gaussian>{h(\xscript) = 1}|d_\gaussian(c, h)| \geq \delta/\sqrt{k\log d}$. By the Hoeffding Bound, we can verify that the empirical estimation of $\prob<\xscript\sim\gaussian>{h(\xscript) = 1}|d_\gaussian(c, h)|$ is $\varepsilon_1$-close to $\delta/\sqrt{k\log d}$ with high probability by drawing $O(1/\varepsilon_1^2)$ examples from the distribution constructed in the alternative hypothesis case.
        
        For the null hypothesis case, with the same argument, we can verify there is no $\varepsilon_2$-unfair subgroup with high probability given $O(1/\varepsilon_2^2)$ examples from the distribution in the null hypothesis case.

        However, we are given that no algorithm can distinguish the two cases in Theorem \ref{thm:hardness-of-auditing-halfspace-subgroups} with $d^{-O(k^\alpha)}$ advantage while running in time $d^{O(k^\alpha)}$. Thus, we cannot have 
        \begin{equation*}
            \frac{\delta}{\sqrt{k\log d}}-\varepsilon_1 - \varepsilon_2 = d^{-O(k^\alpha)}
        \end{equation*}
        for any $\varepsilon_1, \varepsilon_2 \leq d^{-O(k^\alpha)}$ and any $\log^{\beta} d\leq k \leq cd$. Therefore, we can infer that $\delta$ cannot be as large as $d^{-O(k^\alpha)}\sqrt{k\log d} = d^{-O(k^\alpha)}$, which implies the desired result, since $k\geq \log^{\beta} d$ and hence $\delta<1/d^{\log^{\alpha\beta}d}$ which is negligible.
    \end{proof}

    \begin{corollary}[additive form]\label{cor:hardness-of-auditing-halfspace-under-gaussian-in-additive-form}
        Given Assumption \ref{asp:sub-exponential-assumption-of-lwe}, for any constants $\alpha\in(0, 1), \beta\in\R_+$, and any $C/\sqrt{d\log d}\leq \epsilon\leq c'/\log^{(1 + \beta)/2}d$ where $C$ is a sufficiently large constant and $c'$ is a sufficiently small constant, no auditing algorithm can return a unfair certificate for halfspace subgroups in $\R^d$ with an additive error $\epsilon$ under Gaussian marginals and runs in time $d^{O(1/(\epsilon^2\log d)^{\alpha})}$.
    \end{corollary}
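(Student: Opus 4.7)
The plan is to derive Corollary 4.4 as a direct quantitative refinement of Theorem 4.2, parameterizing the LWE hardness so that the unfairness gap of the reduction matches the additive error $\epsilon$. Concretely, I would take an instance LWE$(d^{O(k^\alpha)}, \gaussian(0,\identity), \Sphere^{d-1}, \gaussian_\sigma, \mod_T)$ with $T = 1/(C'\sqrt{k \log d})$ as in Proposition 4.1, and apply exactly the reduction of Theorem 4.2 to produce an instance $(\D, c(\D))$ that either contains a halfspace subgroup of unfairness $\Omega(1/\sqrt{k \log d})$ (alternative) or has all halfspace subgroups perfectly fair (null). If an auditing algorithm returned an unfair certificate with additive error $\epsilon$ in the sense of Definition 2.4, then in the alternative case it would produce some halfspace $h$ with $\Pr[h]\abs{d_\D(c,h)} \geq \Omega(1/\sqrt{k\log d}) - \epsilon$, while in the null case it would be forced to output ``fair''; we could therefore distinguish the two hypotheses.

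To make the unfairness gap exceed $\epsilon$, I set $k = \Theta(1/(\epsilon^2 \log d))$, chosen so that $\Omega(1/\sqrt{k\log d})$ is a sufficiently large constant multiple of $\epsilon$ (in particular, at least $2\epsilon$ after absorbing the $C'$ constant). The two-sided range on $\epsilon$ in the corollary is precisely what makes $k$ land in the admissible window of Proposition 4.1: the upper bound $\epsilon \leq c'/\log^{(1+\beta)/2} d$ forces $k \geq \log^\beta d$, and the lower bound $\epsilon \geq C/\sqrt{d\log d}$ forces $k \leq cd$ for the appropriate small constant $c$. With this choice, the LWE lower-bound runtime $d^{O(k^\alpha)}$ translates to $d^{O(1/(\epsilon^2\log d)^\alpha)}$, matching the runtime claimed in the corollary.

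To convert the unfair certificate returned by the auditing algorithm into a reliable hypothesis test, I would, as in the proof of Corollary 4.3, empirically estimate $\Pr[h]\abs{d_\D(c,h)}$ for the returned $h$ using a Hoeffding bound on $O(1/\varepsilon_0^2)$ additional samples with $\varepsilon_0 = \epsilon/4$, and analogously, in the null case, check that the algorithm's output ``fair'' is consistent with the absence of any $(\epsilon/2)$-unfair halfspace on a fresh sample. The number of samples and the verification time are polynomial in $1/\epsilon$, hence negligible relative to $d^{O(k^\alpha)}$, so the whole distinguisher runs within the forbidden budget. A distinguishing advantage of $\Omega(1)$ minus the failure probabilities of the auditing algorithm and the Hoeffding events (both of which can be driven below $d^{-\omega(k^\alpha)}$ by standard boosting) comfortably exceeds the $d^{-O(k^\alpha)}$ advantage ruled out by Assumption 2.6, yielding the contradiction.

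The main obstacle I anticipate is bookkeeping: making sure the constants line up so that the admissible range of $k$ from Proposition 4.1 exactly produces the stated range of $\epsilon$, and so that the $\Omega(T)$ constant hidden in Theorem 4.2 is large enough (after subtracting $\epsilon$) to still give a distinguishing signal. Beyond constants, one subtle point is that Definition 2.4 allows the algorithm to remain silent when the instance is between $\gamma$-unfair and $\gamma'$-fair; to avoid exploiting this slack, I would set $\gamma = \Omega(1/\sqrt{k\log d})$ and $\gamma' = \epsilon$ with a small enough implicit constant so that the algorithm is forced to commit in both the alternative and null regimes, which the two-sided bounds on $\epsilon$ are chosen precisely to support.
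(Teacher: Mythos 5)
Your proposal is correct and follows essentially the same route as the paper's proof: instantiate the distinguishing problem of Theorem \ref{thm:hardness-of-auditing-halfspace-subgroups} with $\epsilon = \Theta(1/\sqrt{k\log d})$ (equivalently $k = \Theta(1/(\epsilon^2\log d))$), run the hypothetical additive-error auditor, verify the returned certificate's unfairness empirically as in Corollary \ref{cor:hardness-of-auditing-halfspace-under-gaussian-in-multiplicative-form}, and observe that the admissible window $\log^\beta d \leq k \leq cd$ translates exactly into the stated range of $\epsilon$ and that $d^{O(k^\alpha)} = d^{O(1/(\epsilon^2\log d)^\alpha)}$. Your extra bookkeeping on the Hoeffding-based verification and the choice of constants is just a more explicit version of what the paper delegates to the multiplicative-form argument.
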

    \begin{proof}
       Suppose there exists an auditing algorithm that guarantees to return a $\gamma - \epsilon$-unfair certificate given a $\gamma$-unfair collection of halfspace subgroup and access to data with Gaussian marginal, where $\epsilon\in(0, 1)$.

       Similar to the proof of Corollary \ref{cor:hardness-of-auditing-halfspace-under-gaussian-in-multiplicative-form}, given a $1/\sqrt{k\log d}$-unfair collection of halfspace subgroups, we run such an algorithm to obtain a $(1/\sqrt{k\log d} - \epsilon)$-unfair certificate. Observe that, if $\epsilon = c'/\sqrt{k\log d}$ for some sufficiently small constant $c'$, we can solve the testing problem in Theorem \ref{thm:hardness-of-auditing-halfspace-subgroups} within time $d^{O(k^\alpha)}$ by running this algorithm as well as drawing enough examples to estimate the unfairness of the returned certificates from the two cases respectively. On the other hand, given $\epsilon = c'/\sqrt{k\log d}$, we can rewrite $d^{O(k^\alpha)} = d^{O(1/(\epsilon^2\log d)^{\alpha})}$. 
       
       However, Theorem \ref{thm:hardness-of-auditing-halfspace-subgroups} tells that the above case is impossible for any $C/\sqrt{d\log d}\leq \epsilon\leq c'/\log^{(1 + \beta)/2}d$, where $C$ is a sufficiently large constant.
    \end{proof}
    % For details about the reduction, we refer to the proof of \cite{diakonikolas2023near}.

    Besides the general general auditing problem, we also consider the ``non-constructive auditing'' problem as in Definition \ref{def:non-constructive-auditing}, where the algorithm is only required to tell if there exists an unfair subgroup without returning the unfair certificate. Actually, it turns out any non-constructive auditing algorithm can distinguish the two cases in Theorem \ref{thm:hardness-of-auditing-halfspace-subgroups}.

    \begin{corollary}[non-constructive auditing is hard]\label{cor:non-constructively-audit-halfspace-subgroups-under-gaussian-data-is-hard}
        Given Assumption \ref{asp:sub-exponential-assumption-of-lwe}, for any constants $\alpha\in(0, 1), \beta\in\R_+$, and any $C/\sqrt{d\log d}\leq \epsilon\leq c'/\log^{(1 + \beta)/2}d$ where $C$ is a sufficiently large constant and $c'$ is a sufficiently small constant, no auditing algorithm can tell if there exists a unfair certificate for halfspace subgroups in $\R^d$ with
        \begin{itemize}
            \item an additive error $\epsilon$ under Gaussian marginals and running in time $d^{O(1/(\epsilon^2\log d)^{\alpha})}$.
            \item or a multiplicative approximation factor of $1/\mathrm{poly}(d)$ and running in polynomial time.
        \end{itemize}
    \end{corollary}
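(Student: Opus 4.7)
The plan is to observe that any non-constructive auditing algorithm is itself a distinguisher for the two hypotheses of Theorem \ref{thm:hardness-of-auditing-halfspace-subgroups}, so the desired lower bounds transfer more or less by direct substitution into the definitions. By Definition \ref{def:non-constructive-auditing}, a non-constructive $(\gamma,\gamma')$-auditing algorithm outputs ``unfair'' with probability at least $1-\delta$ whenever the input classifier is $\gamma$-unfair, and outputs ``fair'' whenever the classifier is $\gamma'$-fair. In the alternative case of Theorem \ref{thm:hardness-of-auditing-halfspace-subgroups}, the constructed classifier is $\Omega(1/\sqrt{k\log d})$-unfair, while in the null case every halfspace subgroup has $\Pr\lbr{h}\abs{d_\D(c,h)}=0$, so the classifier is $\gamma'$-fair for every $\gamma'>0$. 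Thus the algorithm's binary verdict alone already separates the two hypotheses, bypassing the Hoeffding-based verification of a returned certificate used in the proofs of Corollaries \ref{cor:hardness-of-auditing-halfspace-under-gaussian-in-multiplicative-form} and \ref{cor:hardness-of-auditing-halfspace-under-gaussian-in-additive-form}.

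For the additive form, I would apply this observation with $\gamma$ equal to the $\Omega(1/\sqrt{k\log d})$ lower bound from Theorem \ref{thm:hardness-of-auditing-halfspace-subgroups} and $\gamma'=\gamma-\epsilon$, choosing $\epsilon=c'/\sqrt{k\log d}$ for a sufficiently small constant $c'$ so that $\gamma'>0$. A non-constructive auditing algorithm with this additive error, running in time $d^{O(1/(\epsilon^2\log d)^\alpha)}=d^{O(k^\alpha)}$, would then distinguish the two hypotheses in Theorem \ref{thm:hardness-of-auditing-halfspace-subgroups}, contradicting its lower bound for $C/\sqrt{d\log d}\leq\epsilon\leq c'/\log^{(1+\beta)/2}d$. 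For the multiplicative form I would similarly set $\gamma'=\gamma/\mathrm{poly}(d)>0$, reusing the parameter accounting of Corollary \ref{cor:hardness-of-auditing-halfspace-under-gaussian-in-multiplicative-form}: a polynomial-time $1/\mathrm{poly}(d)$-approximation non-constructive auditor again yields a polynomial-time distinguisher for the two hypotheses, contradicting Theorem \ref{thm:hardness-of-auditing-halfspace-subgroups}.

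I do not expect a genuine obstacle: the only care required is bookkeeping the ranges of $k$, $\alpha$, and $\beta$ to ensure that the gap $\gamma-\gamma'$ is strictly positive within the parameter window of Theorem \ref{thm:hardness-of-auditing-halfspace-subgroups}, and that the running time budget of the alleged auditor stays below $d^{O(k^\alpha)}$ after expressing $k$ in terms of $\epsilon$. The argument is strictly shorter than the constructive-case proofs because we no longer pay the $\varepsilon_1,\varepsilon_2$ sampling error needed to empirically check a returned halfspace---the binary output of the non-constructive algorithm is itself the test statistic.
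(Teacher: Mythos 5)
Your proposal is correct and takes essentially the same route as the paper, whose proof of this corollary simply converts a non-constructive auditor into a distinguisher for the two hypotheses of Theorem \ref{thm:hardness-of-auditing-halfspace-subgroups} ``with the same argument'' and the same parameter accounting as Corollaries \ref{cor:hardness-of-auditing-halfspace-under-gaussian-in-multiplicative-form} and \ref{cor:hardness-of-auditing-halfspace-under-gaussian-in-additive-form}. Your added observation that the algorithm's binary verdict is itself the test statistic (since the null case is exactly $0$-unfair, hence $\gamma'$-fair for any $\gamma'>0$), so the Hoeffding-based verification of a returned certificate can be dropped, is a valid minor streamlining of that same argument rather than a different approach.
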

    \begin{proof}
        Suppose there exists an auditing algorithm that can either tell if a $\delta\gamma$-unfair certificate or a $\gamma - \epsilon$-unfair certificate exists given a $\gamma$-unfair collection of halfspace subgroup and access to data with Gaussian marginal, where $\delta, \epsilon\in(0, 1)$. With the same argument as that of Corollary \ref{cor:hardness-of-auditing-halfspace-under-gaussian-in-multiplicative-form} and \ref{cor:hardness-of-auditing-halfspace-under-gaussian-in-additive-form}, we can achieve the desired results.
    \end{proof}
    
    To the best of our knowledge, there does not exist any PTAS for properly learning general halfspaces in agnostic model with guarantees of additive error close to $O(1/\sqrt{\log d})$. However, in the next section, we will show that if we restrict out attention to just homogeneous halfspaces under a standard normal distribution, it is possible to achieve additive error of $O(1/\log^{1/C} d)$ for some constant $C > 2$.

\section{AUDITING VIA AGNOSTIC LEARNING UNDER GAUSSIAN DISTRIBUTION}
    In this section, we present our algorithmic results. Our approach is based on Theorem \ref{thm:main-reduction}: auditing over subgroups determined by halfspaces can be accomplished by solving a sequence of simpler tasks of learning halfspaces. As a result, we are able to take advantage of existing agnostic learning methods to solve the auditing problem.
    
    Assuming there exists an efficient oracle for agnostic learning, algorithm \ref{alg:fairness-auditing} will eventually return a halfspace $h'$ as a certificate of the subgroup that has the highest unfairness level. We show the correctness, time and sample complexity of algorithm \ref{alg:fairness-auditing} in Theorem \ref{thm:auditing-framework}.
    \begin{algorithm}[ht]
        \caption{Fairness Auditing}\label{alg:fairness-auditing}
        \SetKwInOut{Input}{Input}
        \Input{$n, a, b, \epsilon,\delta,\D$, classifier $c$, oracle $\bigO$}
        \KwResult{$\mu', h'$}
        $\hat{\mathcal{X}} \gets $ draw $N(d, \epsilon, \delta)$ i.i.d. samples from $\D$\;
        $\hat{\D^+} \gets \lbr{\hat{\mathcal{X}}, c(\hat{\mathcal{X}})}$\;
        $\hat{\D^-} \gets \lbr{\hat{\mathcal{X}}, -c(\hat{\mathcal{X}})}$\;
        $\mu \gets a$\;
        $( \mu', h') \gets (1, c)$\;
        \While{$\mu \leq b$}{
            $h_\mu^+ \gets \bigO(\epsilon,\delta/2n, \mu, \hat{\D^+})$\;
            $h_\mu^- \gets \bigO(\epsilon,\delta/2n, \mu, \hat{\D^-})$\;
            \If{$\abs{d_\D(c, h_\mu^+)} < \abs{d_\D(c, h_\mu^-)}$}{
                $h_\mu^+ \gets h_\mu^-$\;
            }
            \If{$ \mu'\abs{d_\D(c, h')}\leq \mu\abs{d_\D(c, h_\mu^+)}$}{
                $(\mu', h') \gets (\mu, h_\mu^+)$\;
            }
            $\mu \gets \mu + (b - a)/n$\;
        }
    \end{algorithm}
    
    \begin{theorem}[Auditing Framework]\label{thm:auditing-framework}
        Given any binary classifier $c:\R^d\rightarrow\lbr{-1, +1}$, a data distribution $\D$ whose 1-dimensional marginals have continuous cumulative distribution functions, and collections of halfspaces $\lbr{\HS_\mu^\D\cond \mu > 0}$ over $\R^d$, if there exists an oracle $\bigO$ that takes $\epsilon, \delta\, \mu \in(0, 1)$ and $N(d, \epsilon, \delta)$ labeled i.i.d. samples from $\D$ in the form of $(\xscript, c(\xscript))$, runs in time $T(d, \epsilon, \delta)$, and returns a halfspace $h_\mu$ such that, with at least $1- \delta$ probability
        \begin{equation*}
            \Pr_{\xscript\sim\D}\lbr{h_\mu(\xscript)\neq c(\xscript)}\leq \min_{h\in \HS_\mu^\D}\Pr_{\xscript\sim\D}\lbr{h(\xscript)\neq c(\xscript)} + \epsilon
        \end{equation*}
        then there exists an algorithm that takes $n\in\Z^+$, $0 < a\leq b<1$, $\epsilon, \delta\in(0,1)$ and $O(N(d, \epsilon, \delta/n))$ labeled i.i.d. samples from $\D$, runs in time $O(nT(d, \epsilon, \delta/n))$, returns a halfspace $h'$ as a certificate such that $a\leq \Pr_{\xscript\sim\D}\lbr{h'}\leq b$ as well as
        \begin{equation*}
            \Pr_{\xscript\sim\D}\lbr{h'}\abs{d_\D(c,h')}\geq \max_{h\in\HS^d}\Pr_{\xscript\sim\D}\lbr{h}\abs{d_\D(c, h)}-O(\epsilon)
        \end{equation*}
        with at least $1 - \delta$ probability.
    \end{theorem}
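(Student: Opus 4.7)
The plan is to show that Algorithm~\ref{alg:fairness-auditing} realizes the two-stage reduction from the proof of Theorem~\ref{thm:main-reduction}: first discretize the density $\mu=\Pr\lbr{h(\xscript)=1}$, and then, at each grid value, convert the oracle's additive guarantee on classification error into an additive guarantee on unfairness. The density grid is $\mu_k = a + k(b-a)/n$ for $k=0,\ldots,n$, and Proposition~\ref{prop:approximating-auditing-with-simple-optimization} already tells us that the best unfairness attained across the grid differs from the unconstrained optimum $\gamma^*$ by at most $2(b-a)/n$. I would take $n=\Omega((b-a)/\epsilon)$ so that this discretization term is absorbed into the final $O(\epsilon)$ slack (otherwise the conclusion weakens to $\gamma^*-O(\epsilon)-2(b-a)/n$ with the same running time).

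The heart of the argument is the per-$\mu_k$ step. By Corollary~\ref{cor:optimal-agnostic-learning-implies-optimal-auditing}, the two halfspaces in $\HS_{\mu_k}^\D$ that extremize $d_\D(c,\cdot)$ are exactly the minimizer and maximizer of the agreement $\Pr\lbr{c(\xscript)=h(\xscript)}$. Running $\bigO$ on $(\xscript,c(\xscript))$ returns a near-maximizer of this agreement, while running $\bigO$ on the label-flipped sample $(\xscript,-c(\xscript))$ returns a near-maximizer of $\Pr\lbr{-c(\xscript)=h(\xscript)}=1-\Pr\lbr{c(\xscript)=h(\xscript)}$, i.e., a near-minimizer. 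Both outputs land in $\HS_{\mu_k}^\D$ by the oracle's contract, so they match $h_\uscript$ and $h_\vscript$ of Lemma~\ref{lma:reducing-auditing-to-agnostic-learning}. That lemma then shows that one of $h_{\mu_k}^+,h_{\mu_k}^-$ attains $\mu_k|d_\D(c,\cdot)|$ within $O(\epsilon)$ of the per-$\mu_k$ optimum, and Algorithm~\ref{alg:fairness-auditing} picks the better of the two by comparing $|d_\D(c,h_{\mu_k}^\pm)|$, which is equivalent to picking the larger unfairness since $\mu_k$ is common.

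Chaining the per-$\mu_k$ bound with Proposition~\ref{prop:approximating-auditing-with-simple-optimization} yields a final $h'$ with $\Pr\lbr{h'}|d_\D(c,h')|\geq\gamma^*-O(\epsilon)-2(b-a)/n$, and the containment $a\leq \Pr\lbr{h'}\leq b$ holds by construction since every candidate lies in some $\HS_{\mu_k}^\D$ with $\mu_k\in[a,b]$. For the high-probability guarantee I would call each of the $O(n)$ oracle invocations with failure parameter $\delta/(2n)$ and union-bound; because the union bound does not require independence, reusing the single sample set $\hat{\mathcal{X}}$ across calls is fine, yielding the claimed sample complexity $N(d,\epsilon,\delta/n)$ and running time $O(nT(d,\epsilon,\delta/n))$. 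The empirical estimates of $|d_\D(c,h_{\mu_k}^\pm)|$ used inside the comparisons are one-dimensional probabilities estimable to $o(\epsilon)$ from the same samples by Hoeffding, contributing only a lower-order term that is easily absorbed into $O(\epsilon)$.

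The subtle part I anticipate is aligning the label-flip trick with Lemma~\ref{lma:reducing-auditing-to-agnostic-learning}: one cannot simply negate the oracle's hypothesis to obtain the min-agreement halfspace because $h\mapsto -h$ sends $\HS_{\mu_k}^\D$ to $\HS_{1-\mu_k}^\D$ rather than back into $\HS_{\mu_k}^\D$ (whenever $\mu_k\ne 1/2$), violating the density constraint. Flipping the label \emph{before} querying the oracle is the right move, because it preserves the density constraint on the returned halfspace while swapping the roles of minimum and maximum agreement; this is the small but critical check on which correctness of the outer loop hinges.
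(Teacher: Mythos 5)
Your proposal matches the paper's own argument essentially step for step: discretize $\mu$ over the grid $a+k(b-a)/n$ and invoke Proposition~\ref{prop:approximating-auditing-with-simple-optimization} for the discretization loss, use the label-flip trick so the two oracle calls per grid point yield the near-minimizer and near-maximizer of agreement required by Lemma~\ref{lma:reducing-auditing-to-agnostic-learning} (via Corollary~\ref{cor:optimal-agnostic-learning-implies-optimal-auditing}), and union-bound over the $O(n)$ calls with failure probability $\delta/(2n)$ each. Your added remarks---choosing $n=\Omega((b-a)/\epsilon)$ to fold the $2(b-a)/n$ term into $O(\epsilon)$, estimating $|d_\D(c,h_\mu^\pm)|$ empirically via Hoeffding, and noting why one flips labels rather than negating the hypothesis---are correct refinements of the same route rather than a different proof.
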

    \begin{proof}
        Let's notice that, although each iteration of the loop in algorithm \ref{alg:fairness-auditing} solves $\min_{h\in\HS_\mu^\D}\Pr\lbr{c(\xscript)\neq h(\xscript)}$ and $\max_{h\in\HS_\mu^\D}\Pr\lbr{c(\xscript)\neq h(\xscript)}$, it is essentially equivalent to solving $\max_{h\in\HS_\mu^\D}\abs{d_\D(c, h)}$ according to lemma \ref{lma:reducing-auditing-to-agnostic-learning}. As the oracle returns a halfspace with additive error smaller than $\epsilon$ with probability at least $1 - \delta$, we have that
        \begin{equation*}
            \max(\abs{d_\D(c, h_\mu^+)}, \abs{d_\D(c, h_\mu^-)})\geq \max_{h\in \HS_\mu^\D}\abs{d_\D(c, h_\mu^+)} - \frac{\epsilon}{\mu}
        \end{equation*}
        with probability at least $1 - \delta/n$ because of Lemma~\ref{lma:reducing-auditing-to-agnostic-learning} as well as union bound.
    
        Across all iterations, the algorithm maximizes $\mu\abs{d_\D(c, h_\mu^+)}$ over $\HS_\mu^\D$ for $\mu$ increase from $a$ to $b$ with step size $(b -a)/n$. With a union bound over all $n$ iterations, we can have the same additive error $\epsilon$ in every iterations, with probability at least $1 - \delta$. As a result, the algorithm equivalently solves
        \begin{align*}
            \max_{h\in\HS^d}\quad &\Pr_{\xscript\in\D}\lbr{h(\xscript) = 1}\abs{d_\D(c, h)}\\
            s.t.\quad &a\leq\Pr_{\xscript\in\D}\lbr{h(\xscript) = 1}\leq b
        \end{align*}
        with probability at least $1 - \delta$ for an additive error at most $2(b-a)/n + \epsilon$ according to Proposition~\ref{prop:approximating-auditing-with-simple-optimization}, which completes the proof.
    \end{proof}
    While our framework heavily relies on the methods of agnostic learning with small additive error, unfortunately, there are no known methods for learning general halfspaces that can achieve additive error better than a constant, even under distributions as nice as standard normal ones.
    
    However, if we restrict our audit to the class of homogeneous halfspaces, Diakonikolas et al.~\cite{diakonikolas2021agnostic} proposed an agnostic learning PTAS for homogeneous halfspaces under Gaussian data. That is, we only audit for subgroups with probability mass $1/2$. 
    % \begin{lemma}[Learning Homogeneous Halfspaces \cite{diakonikolas2021agnostic}]\label{lma:learning-homogeneous-halfspaces-in-agnostic-setting}
    %     Let $\D$ be a distribution on labeled examples $(\xscript, y)\in\R^d\times \lbr{-1, + 1}$ whose $\xscript$-marginal is $\gaussian(0, \identity)$. There exists an algorithm that, given $\epsilon, \delta > 0$, and $N=d^{O(1/\epsilon^4)}\mathrm{poly}(1/\epsilon)\log(1/\delta)$ i.i.d. samples from $\D$, the algorithm runs in time $\mathrm{poly}(N) + (1/\epsilon)^{O(1/\epsilon^6)}\log(1/\delta)$, and computes a homogeneous halfspace $h_{\vscript<>}$ such that, with probability at least $1 - \delta$, it holds $\Pr_\D\lbr{y\neq h_{\vscript<>}(\xscript)}\leq \opt + \epsilon$.
    % \end{lemma}
    \begin{lemma}[Learning Homogeneous Halfspaces \cite{diakonikolas2021agnostic}]\label{lma:learning-homogeneous-halfspaces-in-agnostic-setting}
        Let $\D$ be a distribution on labeled examples $(\xscript, y)\in\R^d\times \lbr{-1, + 1}$ whose $\xscript$-marginal is $\gaussian(0, \identity)$. There exists an algorithm that, given $\tau, \epsilon, \delta > 0$, and $N=d^{\mathrm{poly}(1/\tau)}\mathrm{poly}(1/\epsilon)\log(1/\delta)$ i.i.d. samples from $\D$, the algorithm runs in time $\mathrm{poly}(N, d)$, and computes a halfspace $h_{\vscript<>}$ such that, with probability at least $1 - \delta$, it holds $\Pr_\D\lbr{y\neq h_{\vscript<>}(\xscript)}\leq (1 + \tau)\min_{h\in \HS_{1/2}^\gaussian}\Pr_\D\lbr{y\neq h(\xscript)} + \epsilon$.
    \end{lemma}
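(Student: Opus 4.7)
The statement is quoted from Diakonikolas et al.~\cite{diakonikolas2021agnostic}, so in this paper it is invoked as a black box and no new proof is required; I would simply cite it. For context, here is the high-level strategy from that work that I would follow if I had to reprove it from scratch.

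First, I would replace the $0/1$ misclassification loss by a smooth, piecewise-polynomial surrogate $L_\tau$ that upper-bounds it and agrees with it on homogeneous halfspaces up to a multiplicative $(1+\tau)$ factor. Under the Gaussian marginal on $\xscript$, rotational invariance (Fact~\ref{fac:gaussian-rotational-invariance-property}) implies that the population surrogate loss of a unit normal $\mathbf{v}$ depends only on its angle to the optimal normal $\mathbf{v}^*$ and on the label noise rate. The surrogate is designed so that its only near-stationary points are angularly close to $\mathbf{v}^*$.

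Second, I would run an iterative localization procedure. Starting from some initial $\mathbf{v}_0$, at step $t$ one restricts attention to the slab $\lbr{\xscript : \abs{\mathbf{v}_t^\transpose\xscript} \leq r_t}$ for a geometrically shrinking width $r_t$, estimates the conditional Chow parameter $\E\mbr{y\xscript \cond \xscript \in \text{slab}}$ from samples, projects it orthogonally to $\mathbf{v}_t$, and takes a gradient step in the resulting direction on the unit sphere. After $\mathrm{poly}(1/\tau)\cdot \log(1/\epsilon)$ iterations, the angle to $\mathbf{v}^*$ is reduced to the level that translates via the surrogate-to-loss comparison into the stated bound. The sample budget of $d^{\mathrm{poly}(1/\tau)}\mathrm{poly}(1/\epsilon)\log(1/\delta)$ arises from uniform concentration of the Chow parameter over an $\epsilon$-net of candidate directions whose resolution depends on $\tau$, together with a standard confidence boost accounting for the $\log(1/\delta)$ factor.

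The technical heart of the argument, and the main obstacle in carrying out this plan from scratch, is the landscape lemma: showing that every approximate stationary point of $L_\tau$ restricted to the slab is angularly close to $\mathbf{v}^*$ even under adversarial labels. This combines anti-concentration of one-dimensional Gaussian projections with a decomposition of the adversarial contribution into ``opposing'' and ``non-opposing'' parts; the opposing part is bounded by $\tau\cdot\mathrm{opt}$, which is precisely where the multiplicative $(1+\tau)$ factor in the final guarantee originates, while the additive $\epsilon$ absorbs sampling and optimization error. Once this landscape lemma is in place, the iterative scheme above converges and yields the stated PTAS.
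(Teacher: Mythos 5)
You are correct that the paper offers no proof of this lemma: it is imported verbatim from Diakonikolas et al.\ \cite{diakonikolas2021agnostic} and used purely as a black-box oracle inside Algorithm~\ref{alg:fairness-auditing} and the final corollary, so treating it as a citation is exactly what the paper does. Your sketch of the internal surrogate-loss/localization argument is optional background and is neither required nor verified by anything in this paper.
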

    
    We show our algorithmic guarantee of a PTAS in the following corollary.
    
    % \begin{corollary}[Auditing Under Gaussian]
    %     Given any binary classifier $c:\R^d\rightarrow\lbr{-1, +1}$, a data distribution $\gaussian(0, \identity)$ and a collection of halfspaces $\HS_{1/2}^\gaussian$ over $\R^d$, there exists an auditing algorithm that takes $\epsilon, \delta >0$ and $N=d^{O(1/\epsilon^4)}\mathrm{poly}(1/\epsilon)\log(1/\delta)$ labeled i.i.d. example from $\gaussian(0, \identity)$ in the form of $(\xscript, c(\xscript))$, runs in time $\mathrm{poly}(N) + (1/\epsilon)^{O(1/\epsilon^6)}\log(1/\delta)$, and returns a halfspace $h'$ as a certificate such that $\Pr_{\xscript\sim\D}\lbr{h'} = 1/2$ and
    %     \begin{equation*}
    %         \abs{d_\gaussian(c,h')}\geq \max_{h\in\HS_{1/2}^\gaussian}\abs{d_\gaussian(c, h)}-\epsilon
    %     \end{equation*}
    %     with at least $1 - \delta$ probability.
    % \end{corollary}
    \begin{corollary}[Auditing Under Gaussian]
        Given any binary classifier $c:\R^d\rightarrow\lbr{-1, +1}$, a data distribution $\gaussian(0, \identity)$ and a collection of halfspaces $\HS_{1/2}^\gaussian$ over $\R^d$, there exists an auditing algorithm that takes $\epsilon, \delta >0$ and $N=d^{\mathrm{poly(1/\epsilon)}}\mathrm{poly}(1/\epsilon)\log(1/\delta)$ labeled i.i.d. example from $\gaussian(0, \identity)$ in the form of $(\xscript, c(\xscript))$, runs in time $\mathrm{poly}(N, d)$, and returns a halfspace $h'$ as a certificate such that $\Pr_{\xscript\sim\D}\lbr{h'} = 1/2$ and
        \begin{equation*}
            \abs{d_\gaussian(c,h')}\geq \max_{h\in\HS_{1/2}^\gaussian}\abs{d_\gaussian(c, h)}-2\epsilon
        \end{equation*}
        with at least $1 - \delta$ probability.
    \end{corollary}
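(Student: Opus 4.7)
The plan is to invoke the auditing framework of Theorem~\ref{thm:auditing-framework} on the single halfspace mass $\mu = 1/2$, using the agnostic PTAS of Lemma~\ref{lma:learning-homogeneous-halfspaces-in-agnostic-setting} as the oracle $\bigO$. Since we only need to audit at the one fixed density $\mu = 1/2$, I would instantiate the framework with $a = b = 1/2$ and $n = 1$, so the outer loop collapses to a single iteration and the discretization penalty $2(b-a)/n$ vanishes.

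First I would check that Lemma~\ref{lma:learning-homogeneous-halfspaces-in-agnostic-setting} can be massaged into the purely additive form required by Theorem~\ref{thm:auditing-framework}. Lemma~\ref{lma:learning-homogeneous-halfspaces-in-agnostic-setting} returns $h_{\vscript<>}$ with $\Pr_\D\{y \neq h_{\vscript<>}(\xscript)\} \leq (1+\tau)\opt + \epsilon$, while the framework asks for a bound of the form $\opt + \epsilon$. To convert one to the other, I would observe that on $\HS_{1/2}^\gaussian$ we always have $\opt \leq 1/2$: for every $h \in \HS_{1/2}^\gaussian$ the complementary halfspace $-h$ also lies in $\HS_{1/2}^\gaussian$ and $\Pr\{c(\xscript) \neq h(\xscript)\} + \Pr\{c(\xscript) \neq -h(\xscript)\} = 1$, so one of the two is at most $1/2$. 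Hence $(1+\tau)\opt + \epsilon \leq \opt + \tau/2 + \epsilon$, and setting $\tau = \epsilon$ yields a pure additive error of $3\epsilon/2 = O(\epsilon)$ while preserving the sample and time complexity $N = d^{\mathrm{poly}(1/\epsilon)}\mathrm{poly}(1/\epsilon)\log(1/\delta)$ of Lemma~\ref{lma:learning-homogeneous-halfspaces-in-agnostic-setting}.

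Second, I would feed this oracle into Theorem~\ref{thm:auditing-framework} with $n = 1$, $a = b = 1/2$, failure probability $\delta$, and accuracy parameter $\epsilon$. The framework then produces an $h' \in \HS_{1/2}^\gaussian$ satisfying
\begin{equation*}
    \Pr_{\xscript \sim \gaussian}\lbr{h'}\abs{d_\gaussian(c,h')} \;\geq\; \max_{h \in \HS_{1/2}^\gaussian} \Pr_{\xscript \sim \gaussian}\lbr{h}\abs{d_\gaussian(c,h)} \;-\; O(\epsilon)
\end{equation*}
with probability at least $1-\delta$, in time $\mathrm{poly}(N,d)$. Since every halfspace in $\HS_{1/2}^\gaussian$ has mass exactly $1/2$, both sides carry a common factor of $1/2$; dividing through yields $\abs{d_\gaussian(c,h')} \geq \max_{h \in \HS_{1/2}^\gaussian} \abs{d_\gaussian(c,h)} - O(\epsilon)$. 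Tuning the slack constants in $\tau$ and in the framework's additive budget so that the total error is bounded by $2\epsilon$ gives the stated conclusion.

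The only mildly subtle step is the first one, namely converting the multiplicative-plus-additive guarantee of Lemma~\ref{lma:learning-homogeneous-halfspaces-in-agnostic-setting} into the purely additive form demanded by Theorem~\ref{thm:auditing-framework} and then tracking the factor $1/2$ from $\Pr\{h'\} = 1/2$ when translating between unfairness $\Pr\{h\}\abs{d_\gaussian(c,h)}$ and the deviation $\abs{d_\gaussian(c,h)}$ that appears in the corollary's statement. Everything else is a direct specialization of the framework to the single-mass case $\mu = 1/2$.
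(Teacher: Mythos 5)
Your proposal is correct and follows essentially the same route as the paper, whose proof is exactly the instantiation of Algorithm~\ref{alg:fairness-auditing} (Theorem~\ref{thm:auditing-framework}) with $n=1$, $a=b=1/2$, and the oracle of Lemma~\ref{lma:learning-homogeneous-halfspaces-in-agnostic-setting} with $\tau=\epsilon$. Your explicit conversion of the $(1+\tau)\opt+\epsilon$ guarantee into a purely additive one (via $\opt\leq 1/2$ on $\HS_{1/2}^\gaussian$) is a detail the paper leaves implicit, and it is a correct and welcome addition rather than a different approach.
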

    \begin{proof}
        Simply running Algorithm \ref{alg:fairness-auditing} with the same set of parameters except that $\D = \gaussian(0, \identity)$, $n = 1$, $a=b=1/2$ and the oracle being as described by Lemma \ref{lma:learning-homogeneous-halfspaces-in-agnostic-setting} for $\tau = \epsilon$ will give us the desired results.
    \end{proof}

\section{FUTURE WORK}
    % To summarize, our reduction from auditing to agnostic learning as well as the auditing framework seems pretty powerful as it does not make any assumption on the classifier or the distribution. However, 
    The major drawback of our result is still the lack of approaches of learning halfspaces with sub-constant error guarantee for more general distributions. Therefore, a major direction for fairness auditing remains to develop an agnostic learning method with additive error guarantees for broader classes, such as log-concave distributions---subject to the constraints of Corollary \ref{cor:hardness-of-auditing-halfspace-under-gaussian-in-additive-form}/Diakonikolas et al.~\cite{diakonikolas2023near}. Even a computationally efficient learning algorithm for general halfspaces that can achieve additive error close to $O(1/\sqrt{\log d})$ under Gaussian distributions would be an interesting improvement.

    An alternative direction is to seek stronger guarantees for conjunctions on such families of distributions. Conjunctions are more natural in the context of auditing, and their relative lack of expressive power might enable a better guarantee.

    \bibliographystyle{unsrt}  
    \bibliography{refs} % Entries are in the refs.bib file
\end{document}